\newcommand{\arxiv}[1]{\iftoggle{neurips}{}{#1}}
\newcommand{\neurips}[1]{\iftoggle{neurips}{#1}{}}
\global\toggletrue{neurips}
\global\togglefalse{neurips}
\newcommand{\loose}{\looseness=-1}
\newcommand{\neutralize}[1]{\expandafter\let\csname c@#1\endcsname\count@}
\declaretheorem[name=Theorem,parent=section]{theorem}
\declaretheorem[name=Lemma,parent=section]{lemma}
\declaretheorem[name=Assumption, parent=section]{assumption}
\declaretheorem[name=Condition, parent=section]{condition}
\declaretheorem[qed=$\triangleleft$,name=Example,parent=section]{example}
\declaretheorem[name=Remark,style=definition, parent=section]{remark}
\declaretheorem[name=Proposition, parent=section]{proposition}
  \renewenvironment{proof}[1][Proof]%
  {%
   \par\noindent{\bfseries\upshape {#1.}\ }%
  }%
  {\qed\newline}
\theoremstyle{definition}  %Sets style of subsequent newtheorems to 'definition'
\newtheorem{corollary}{Corollary}[section]
\theoremstyle{plain}
\newtheorem{definition}{Definition}[section]
\xpatchcmd{\proof}{\itshape}{\normalfont\proofnameformat}{}{}
\newcommand{\proofnameformat}{\bfseries}
\newcommand{\pref}[1]{\cref{#1}}
\renewcommand{\eqref}[1]{\texorpdfstring{\hyperref[#1]{(\ref*{#1})}}{(\ref*{#1})}}
\Crefname{assumption}{Assumption}{Assumptions}
\Crefname{subsubsection}{Section}{Sections}
    \let\Cref\crtCref
    \let\cref\crtcref
\DeclareDocumentCommand{\XDeclarePairedDelimiter}{mm}
 {
  \__egreg_delimiter_clear_keys: % reset to the default
  \keys_set:nn { egreg/delimiters } { #2 }
  \use:x % we want to expand the values of the token variables set with the keys
   {
    \exp_not:n {\NewDocumentCommand{#1}{sO{}m} }
     {
      \exp_not:n { \IfBooleanTF{##1} }
       {
        \exp_not:N \egreg_paired_delimiter_expand:nnnn
         { \exp_not:V \l_egreg_delimiter_left_tl }
         { \exp_not:V \l_egreg_delimiter_right_tl }
         { \exp_not:n { ##3 } }
         { \exp_not:V \l_egreg_delimiter_subscript_tl }
       }
       {
        \exp_not:N \egreg_paired_delimiter_fixed:nnnnn 
         { \exp_not:n { ##2 } }
         { \exp_not:V \l_egreg_delimiter_left_tl }
         { \exp_not:V \l_egreg_delimiter_right_tl }
         { \exp_not:n { ##3 } }
         { \exp_not:V \l_egreg_delimiter_subscript_tl }
       }
     }
   }
 }
\XDeclarePairedDelimiter{\supnorm}{
  left=\lVert,
  right=\rVert,
  subscript=\infty
  }
\newcommand{\supp}{\mathsf{supp}}
\newcommand{\mathand}{\quad\text{and}\quad}
\newcommand{\piref}{\pi_{\mathsf{ref}}}
\newcommand{\KL}{\mathsf{KL}}
\newcommand{\pirlhf}{\pi_{\mathsf{rlhf}}}
\newcommand{\pidpo}{\pi_{\mathsf{dpo}}}
\newcommand{\piipo}{\pi_{\mathsf{ipo}}}
\newcommand{\pihypo}{\pi_{\mathsf{hypo}}}
\newcommand{\cglobal}{C_{\mathsf{glo}}}
\newcommand{\edpo}{\varepsilon_{\mathsf{dpo}}}
\newcommand{\ereward}{\varepsilon_{\mathsf{reward}}}
\newcommand{\ekl}{\varepsilon_{\mathsf{kl}}}
\newcommand{\ldpo}{\ell_{\mathsf{dpo}}}
\newcommand{\lipo}{\ell_{\mathsf{ipo}}}
\newcommand{\Rdpo}{\Rcal_{\mathsf{dpo}}}
\newcommand{\rdpo}{\widehat{r_{\mathsf{dpo}}}}
\newcommand{\ehypo}{\varepsilon_{\mathsf{hypo}}}
\newcommand{\rhypo}{\widehat{r_{\mathsf{hypo}}}}
\newcommand*\colourcheck[1]{%
  \expandafter\newcommand\csname #1check\endcsname{\textcolor{#1}{\ding{52}}}%
}
\newcommand*\colourquestion[1]{%
  \expandafter\newcommand\csname #1question\endcsname{\textcolor{#1}{\textbf{?}}}%
}
  \renewcommand{\citet}{\cite}
\let\OldStatex\Statex
\renewcommand{\Statex}[1][3]{%
  \setlength\@tempdima{\algorithmicindent}%
  \OldStatex\hskip\dimexpr#1\@tempdima\relax}
\let\oldparagraph\paragraph
\renewcommand{\paragraph}[1]{\oldparagraph{#1.}}
\algrenewcommand\algorithmicrequire{\textbf{require}}
\title{The Importance of Online Data: \\Understanding Preference Fine-tuning via Coverage}
\author{%
  David S.~Hippocampus\thanks{Use footnote for providing further information
    about author (webpage, alternative address)---\emph{not} for acknowledging
    funding agencies.} \\
  Department of Computer Science\\
  Cranberry-Lemon University\\
  Pittsburgh, PA 15213 \\
  \texttt{hippo@cs.cranberry-lemon.edu} \\
  % examples of more authors
  % \And
  % Coauthor \\
  % Affiliation \\
  % Address \\
  % \texttt{email} \\
  % \AND
  % Coauthor \\
  % Affiliation \\
  % Address \\
  % \texttt{email} \\
  % \And
  % Coauthor \\
  % Affiliation \\
  % Address \\
  % \texttt{email} \\
  % \And
  % Coauthor \\
  % Affiliation \\
  % Address \\
  % \texttt{email} \\
}
\author{
      Yuda Song$^1$ \; Gokul Swamy$^1$ \; Aarti Singh$^1$ \; J. Andrew Bagnell$^{1,2}$ \; Wen Sun$^3$\\
      \vspace{-2mm} \\
      \normalsize{$^1$Carnegie Mellon University \qquad $^2$Aurora Innovation
      \qquad $^3$ Cornell University }\\
      \vspace{-2mm} \\
      \normalsize{\texttt{\{yudas,gswamy,aarti\}@cs.cmu.edu},\; \texttt{dbagnell@aurora.tech},\; \texttt{ws455@cornell.edu}}
    }
\date{}
\begin{document}

\maketitle
\neurips{\vspace{-5mm}}
\begin{abstract}
  Learning from human preference data has emerged as the dominant paradigm for fine-tuning large language models (LLMs). The two most common families of techniques -- online reinforcement learning (RL) such as Proximal Policy Optimization (PPO) and offline contrastive methods such as Direct Preference Optimization (DPO) -- were positioned as equivalent in prior work due to the fact that both have to start from the same offline preference dataset. To further expand our theoretical understanding of the similarities and differences between online and offline techniques for preference fine-tuning, we conduct a rigorous analysis through the lens of \textit{dataset coverage}, a concept that captures how the training data covers the test distribution and is widely used in RL. We prove that a global coverage condition is both necessary and sufficient for offline contrastive methods to converge to the optimal policy, but a weaker partial coverage condition suffices for online RL methods. This separation provides one explanation of why online RL methods can perform better than offline methods, especially when the offline preference data is not diverse enough. Finally, motivated by our preceding theoretical observations, we derive a hybrid preference optimization (HyPO) algorithm that uses offline data for contrastive-based preference optimization and online unlabeled data for KL regularization. Theoretically and empirically, we demonstrate that HyPO is more performant than its pure offline counterpart DPO, while still preserving its computation and memory efficiency. \looseness=-1
\end{abstract}

\neurips{\vspace{-3mm}}
\section{Introduction}
\neurips{\vspace{-3mm}}
Due to the difficulty of manually specifying reward functions for complex tasks \citep{casper2023open}, preference-based learning has emerged as a critical component in the fine-tuning procedure for large language models (LLMs) \citep{stiennon2020learning, ouyang2022training, touvron2023llama, team2023gemini}.
There are two predominant flavors of preference learning for LLMs: online reinforcement learning (RL) methods such as PPO \citep{christiano2017deep, ouyang2022training} 
and offline contrastive methods like Direct Preference Optimization (DPO) \citep{rafailov2024direct} and Identity Preference Optimization (IPO) \citep{azar2024general}.

Online RL methods usually follow the two-stage procedure prescribed in \citet{ouyang2022training}: one first trains a reward model (classifier) on a fixed offline preference dataset before using it to provide reward labels for on-policy generations, which are then fed to a downstream RL algorithm like Proximal Policy Optimization (PPO) \citep{schulman2017proximal}. Since the reward model is learned from static offline preference data, to avoid over-optimizing the reward model \citep{gao2023scaling}, one typically adds a reverse KL penalty to encourage the model to stay close to some reference policy. We will refer to this procedure as reinforcement learning from human feedback (RLHF) in this paper. While empirically performant, RLHF requires repeated querying of the reward model (which is often itself an LLM) as well as sampling from the current policy.
In response to the computational expense and relatively complex nature of this procedure, purely offline methods like DPO \citep{rafailov2024direct} and IPO \citep{azar2024general} have been proposed as alternative methods for preference fine-tuning. These methods do not need to fit separate reward models, instead opting to simply train the policy directly on the offline preference dataset via a ranking loss.

Offline contrastive methods like DPO are usually derived via applying a reparameterization trick to the closed-form solution of the minimum relative entropy problem \citep{ziebart2008maximum} that RLHF techniques attempt to approximate. Thus, several authors have described these methods as equivalent (at least in theory) to the standard RLHF procedure \citep{rafailov2024direct,azar2024general}. However, recent (mostly empirical) work has contradicted this perspective: \citet{ tang2024understanding} find that online methods out-perform offline methods and attribute this fundamentally to on-policy sampling, \citet{xu2024dpo} argues that the online RL methods produce an often desirable subset of the possible DPO loss minimizers, and \citet{tajwar2024preference} provide empirical support for the claim that online and contrastive training provide orthogonal benefits.
However, a rigorous theoretical separation is still lacking in the pre-existing literature, which motivates our key questions:
\begin{quote}
  \begin{center}
        \emph{What is the statistical separation between the online RLHF method and offline contrastive methods? What causes this separation and what does it imply?
    }
    \end{center}
\end{quote}

To answer these questions, we focus on the coverage of the preference dataset, a key concept that is widely used in RL \citep{kakade2002approximately,munos2008finite,zhan2022offline} for analyzing the impact of offline or exploratory data distributions. Through the lens of coverage of the offline preference dataset, we make the following contributions: \looseness=-1

%\noindent 
\begin{itemize}

\item We prove that the global coverage condition \citep{munos2008finite}, the strongest possible coverage condition in RL, is necessary for offline contrastive algorithms like DPO to converge to the optimal policy. In contrast, we identify 
  a weaker local coverage condition that is sufficient for online RLHF algorithms,
  thus provably separating the two types of algorithms. 
The separation is due to the difference in reward  
modeling and on/offline regularization -- in short, \textit{there is no free lunch
from bypassing explicit reward learning and online rollouts}. As global coverage might sometimes be violated in practice, our separation result can perhaps explain why RLHF works better than offline methods \citep{tajwar2024preference, tang2024understanding, yuan2024advancing}.

%\noindent 
\item 
Although offline contrastive methods are derived from a reverse-KL objective, we prove that the policies trained via offline methods can still have infinite reverse-KL in the partial coverage setting. In contrast, we show that RLHF can always control the reverse KL via directly optimizing reverse KL using online samples. This means that on realistic problems, RLHF has stronger guarantees for remaining close to the reference policy than offline contrastive methods.

%\noindent 
\item 
We propose Hybrid Preference Optimization (HyPO) to address the deficiencies of offline contrastive methods while maintaining some of their computational simplicity. HyPO is a hybrid RL algorithm \citep{xie2021policy,song2022hybrid} where offline data is used for the DPO objective while online samples are used to explicitly control the reverse KL divergence to the reference policy. We empirically demonstrate that HyPO outperforms DPO, on the \emph{TL;DR} summarization task \citep{stiennon2020learning} on all metrics including both the GPT4 win-rate and the reverse KL divergence to the reference policy, and on general chat benchmarks such as AlpacaEval 2.0 \citep{dubois2024length}, trained with the UltraFeedback dataset \citep{cui2023ultrafeedback}. In addition, HyPO also mitigates the overfitting issues observed in the offline constrastive based methods \citep{tang2024understanding}.

%\noindent 
\item 
We provide an explanation of why RLHF and offline contrastive methods decrease the probability of both preferred and rejected responses during training.
In particular, under our function approximation-based global coverage condition, we show that such behavior is actually desirable for DPO and RLHF policies to extrapolate and generalize to optimal actions that do not appear in the training dataset. However, without function approximation, algorithms like DPO can mistakenly increase the likelihood of sub-optimal actions.
This establishes the importance of function approximation for the success of the algorithms such as DPO.   \looseness=-1
\end{itemize}

Taken together, our results establish the critical role \textit{coverage} plays in terms of convergence properties of preference learning algorithms as well as in the design of new, performant empirical approaches.

\neurips{\vspace{-3mm}}
\section{Related Work}
\neurips{\vspace{-3mm}}

\paragraph{Preference Fine-Tuning (PFT)} As discussed in the introduction of our work, 
there are two major paradigms for preference fine-tuning of LLMs. 
The first one, online RL methods \citep{ouyang2022training}, 
proposes to first train a reward model (classifier) to predict human preferences, 
followed by running an RL method to optimize this learned
reward function. While PPO \citep{schulman2017proximal} is the most popular RL algorithm used in the online RLHF framework by far \citep{stiennon2020learning,ouyang2022training, touvron2023llama},
 more recent work by \cite{ahmadian2024back} shows that simpler online RL algorithms like REINFORCE \citep{williams1992simple} also work well. The second class of methods, offline contrastive techniques \citep{zhao2023slic,rafailov2024direct,azar2024general}, avoid explicit reward modeling and directly optimize their objective on the offline preference dataset. Recently there are \textit{hybrid} methods that 
combine offline preference data with online preference labels \citep{xiong2023iterative,guo2024direct, rosset2024direct, azar2024general} -- we leave extending our analysis to this setting to future work. Throughout our paper, we assume for simplicity of analysis that preferences are generated by an underlying utility function and therefore contain no intransitivities \citep{munos2023nash,swamy2024minimaximalist}. 
% However, 
% labeling online preference labels on the fly is usually expensive and in this work 
% we only focus on the setting that only the offline data is labeled with preference. 
\looseness=-1
\paragraph{Understanding PFT} 
Prior work has studied different parts of the standard RLHF recipe \citep{gao2023scaling,kirk2023understanding,singhal2023long,eisenstein2023helping} and the impact of preference data quality \citep{sharma2024critical}. In our work, we instead take a converge-based perspective on the relationship between online RL methods and offline contrastive methods. Although derived from the same minimum relative entropy objective \citep{ziebart2008maximum}
and perceived as equivalent by some early work \citep{rafailov2024direct,azar2024general}, more recent work has started to unravel the distinctions between these two classes of methods. \citet{tang2024understanding} repeatedly observe better performance from online rather than offline methods and after rigorously validating a variety of hypotheses, conclude that on-policy sampling is indispensable for ensuring a high quality policy. \citet{tajwar2024preference} perform an in-depth study of the effects of preference data, contrastive losses, and on-policy sampling and conclude that a combination of contrastive losses and interactive training is most preferable in practice. \citep{xu2024dpo} also observe better performance from online PPO than from offline DPO and argue this is because the former is able to eliminate a larger set of policies that are undesirable from the perspective of the later. We supplement these mostly empirical observations with a rigorous theoretical explanation for the observed behavior through the lens of dataset coverage, as well as designing an algorithm that addresses the key weaknesses of offline contrastive approaches.

Recent work \citep{yuan2024advancing,pal2024smaug,rafailov2024r} has observed an interesting effect of the DPO procedure: a simultaneous decrease in the likelihood of both preferred and rejected responses. This behavior is surprising at first glance because one would expect that DPO will increase the likelihood of preferred responses and decrease the likelihood of rejected responses. We provide a rigorous statistical explanation of this behavior and show that this behavior is natural when the offline preference data only contains sub-optimal responses but the function approximation allows DPO to extrapolate and generalize to the correct optimal responses. This highlights the role of function approximation in the success of offline contrastive based methods. 

\paragraph{Coverage}  We analyze online RLHF and offline contrastive-based methods via the concept of coverage.
Coverage measures how well an offline (data) distribution covers the support of the policy of interest, which has been the key technical tool in offline RL \citep{munos2008finite,xie2021bellman,uehara2021pessimistic,zhan2022offline}, offline-online RL \citep{ross2012agnostic, xie2021policy,song2022hybrid,amortila2024harnessing} and online RL \citep{kakade2002approximately,bagnell2003policy,xie2023the}. The data coverage plays an important role in our analysis since both online RLHF and offline contrastive-based methods rely on an offline preference dataset for learning.  \looseness=-1

\neurips{\vspace{-3mm}}
\section{Preliminaries}\label{sec:pre}
\neurips{\vspace{-2mm}}
Following a wide range of recent works \citep{rafailov2024direct,azar2024general},
we consider the RLHF problem in the contextual bandit formulation \citep{langford2008epoch}.
This is a reasonable simplification, as one can consider the generated sequence of 
tokens as one single action, due to the fact that the states are the generated tokens, 
and the dynamics are deterministic. We denote the context (prompt) space as $\Xcal$, 
and the action (response) space as $\Ycal$. Note that due to the finiteness of the 
possible tokens, the action space is finite but combinatorially large. 
We use $\rho \in \Delta(\Xcal)$ to denote the distribution of the prompts, and
$\pi: \Xcal \to \Delta(\Ycal)$ as policies (LLMs) that map prompts to a distribution 
of responses. We also consider the reward function class $\Rcal: \Xcal \times \Ycal \to \RR$,
which assigns a reward to each context-response pair.

We assume access to a reference policy $\piref$, which is usually referred to as the policy learned using supervised data when training the LLM, that needs to be further fine-tuned to align with human values. An offline preference dataset is collected in the format 
of $\Dcal = \{x,y^+, y^-\}$ triplets: given context $x \sim \rho$, the preference policy samples 
two responses $y^1,y^2 \sim \mu(\cdot \mid x)$, where $\mu$ is the offline response distribution. 
Previous works assume either $\mu$ to be the same distribution as 
$\piref$ \citep{rafailov2024direct} or different offline distribution \citep{azar2024general, rosset2024direct, gao2024rebel}.
Then, $y^1$ is labelled as $y^+$ 
(thus $y^2$ as $y^-$)
with probability $p^\ast(y^1 \succ y^2 \mid x)$, where $p^\ast$ is defined by the 
Bradley-Terry model \citep{bradley1952rank}:
\begin{align*}
  p^\ast(y^1 \succ y^2 \mid x) = \frac{\exp(r^\ast(x,y^1))}{\exp(r^\ast(x,y^1)) + \exp(r^\ast(x,y^2))},
\end{align*}

where $r^\ast$ is the human's implicit reward function. Note that this rules out intransitive preferences \citep{swamy2024minimaximalist, munos2023nash}. 
Throughout the paper, we will make the following assumption on the reward function:
\begin{assumption}[Boundedness of the reward]\label{assump:reward}
$\nrm*{r^\ast}_{\infty} \leq R.$
\end{assumption}

In many previous works, 
this formulation has been the 
canonical way to model the preference data in the RLHF literature 
\citep{christiano2017deep,rafailov2024direct,azar2024general}.
 The goal is to learn a policy $\pi$ to maximize the objective 
 $J(\pi)$, where 
\begin{align}\label{eq:objective}
  J(\pi)  = \EE_{x \sim \rho} \brk*{ \EE_{y \sim \pi(\cdot \mid x)} [r^\ast(x,y)] 
  - \beta \KL(\pi(\cdot \mid x) || \piref(\cdot \mid x))},
\end{align}
i.e., we want to both maximize the human implicit reward, and not deviate too much
from the reference policy. We denote the optimal policy $\pi^\ast \in \argmax_{\pi \in \Pi} J(\pi)$.  Here we call $\KL(\pi(\cdot \mid x) || \piref(\cdot \mid x))$ reverse KL because $\pi$ -- the policy to be optimized, appears first.  We will call $\KL(\piref(\cdot \mid x) || \pi(\cdot \mid x) )$ forward KL. By the definition of KL, we have \loose
\begin{align}
\text{Definition of reverse KL: }\qquad \KL(\pi(\cdot \mid x) || \piref(\cdot \mid x)) := \EE_{y\sim \pi(\cdot \mid x)} \brk*{ \ln (\pi(y|x) / \piref(y|x) )}. \label{eq:reverse_KL}
\end{align} Note that the expectation in reverse KL is under $\pi$, 
% (highlighted by red in Eq.~\ref{eq:reverse_KL}), 
indicating that evaluating and optimizing reverse KL requires drawing \emph{online samples} from $\pi$. In contrast, evaluating forward KL only requires \emph{offline samples} drawn from $\piref$. As we will show, this key difference between reverse KL and forward KL plays an important role of separating online RLHF and offline contrastive methods such as DPO.
In this paper, we consider two types of algorithms:
online RL-based algorithms, and offline contrastive-based algorithms.

\neurips{\vspace{-2mm}}
\paragraph{Online RLHF Algorithms}
We consider algorithms such as 
\citet{christiano2017deep,ahmadian2024back} as the online RL based
methods. We abstract these algorithms as the following procedure: the algorithm
performs the following two-stage procedure: one first trains a reward model $\widehat{r}$
that minimizes the Bradley-Terry loss \footnote{We use $\widehat \EE$ to denote the empirical expectation over the dataset.}
\begin{align}\label{eq:bt_loss}
  \widehat{r} \in \argmax_{r \in \Rcal} \widehat \EE_{x,y^+, y^- \sim \Dcal} \brk*{ \log \prn*{\frac{\exp( r(x,y^+))}{\exp( r(x,y^+)) + \exp( r(x,y^-))}}},
\end{align}
and perform policy optimization (such as PPO \citep{schulman2017proximal})
to optimize the policy  with the reward model $\widehat{r}$:
\begin{align*}
  \pirlhf \in \argmax_{\pi} \widehat \EE_{x \sim \Dcal} \brk*{\EE_{y \sim \pi(\cdot \mid x)} [\widehat{r}(x,y)] - \beta \KL\prn*{\pi(\cdot \mid x) || \piref(\cdot \mid x)}}.
\end{align*}
However, this policy optimization step requires extensive online 
sampling, and possibly training an additional critic model (e.g., PPO), in addition to 
the reward model and policy. 
\neurips{\vspace{-2mm}}
\paragraph{Offline Contrastive Algorithms}
To circumvent the above-mentioned
computational burden, several purely offline contrastive-based methods
(i.e., without RL) have been proposed. In this paper, we focus 
on the following two most representative methods.
The first is Direct Preference Optimization (DPO) \citep{rafailov2024direct},
where the objective is 
$\pidpo \in \argmax_{\pi} \ldpo(\pi)$ with 
\begin{align}\label{eq:ldpo}
  \ldpo(\pi) =  \widehat \EE_{x,y^+, y^- \sim \Dcal} \brk*{ \log \prn*{ \frac{\exp \prn*{ \beta \log\prn*{\frac{\pi(y^+ \mid x)}{\piref(y^+ \mid x)}}}}{\exp \prn*{ \beta \log\prn*{\frac{\pi(y^+ \mid x)}{\piref(y^+ \mid x)}}} + \exp \prn*{ \beta \log\prn*{\frac{\pi(y^- \mid x)}{\piref(y^- \mid x)}}}}}}.
\end{align} 

Another offline contrastive method we will discuss in our paper is Identity Preference Optimization \citep{azar2024general}, but we will defer its technical details to the appendix.

\neurips{\vspace{-2mm}}
\section{Offline Contrastive Methods Require a Stronger Coverage 
Condition than Online RL Methods}
\neurips{\vspace{-2mm}}
We start by introducing the mathematical formulation of the coverage framework. 
The strongest coverage condition is the following global coverage condition \citep{munos2008finite}:
we say any offline distribution $\mu$ covers a policy $\pi$ if we have 
$\max_{x,y: \rho(x) > 0} \frac{\pi(y \mid x)}{\mu(y \mid x)} \leq \cglobal.$ Throughout this section, we will adopt the setting where $\mu = \piref$ \citep{rafailov2024direct}. 
Formally, we assume the following condition:
\begin{assumption}[Global Coverage]
  \label{assump:global}
  For all $\pi$, we have 
  \begin{align*}
    \max_{x,y: \rho(x) > 0} \frac{\pi(y \mid x)}{\piref(y \mid x)} \leq \cglobal.
  \end{align*}
  \neurips{\vspace{-3mm}}
\end{assumption} 
For the coverage terms, we always adopt the convention that $\frac{0}{0}= 0$. Note that one sufficient condition for this assumption is that, for any prompt 
$x$, and any token 
sequence $y$, we have $\piref(y \mid x) \geq 1/\cglobal$. 

As has been recognized in the offline RL literature, global coverage is a  
strong assumption, and efforts have been made to circumvent this assumption with 
more relaxed coverage conditions \citep{uehara2021pessimistic,chen2022offline,zhan2022offline}. 
In this paper, we will consider the following partial coverage assumption that is
weaker than \pref{assump:global}: \looseness=-1
\begin{assumption}[Local KL-ball Coverage]
  \label{assump:klball}
For all $\ekl <\infty$ and all policy $\pi$ such that $\EE_{x \sim \rho} \brk*{ \KL(\pi(\cdot \mid x) || \piref(\cdot \mid x))} \leq \ekl$,
  we have 
\neurips{\vspace{-3mm}}
  \begin{align*}
    \max_{x,y: \rho(x) > 0}\frac{\pi(y \mid x)}{\piref(y \mid x)} \leq C_{\ekl}.
  \end{align*}
\end{assumption} 
Note that $C_{\ekl}$ depends on $\ekl$. This coverage notion is relatively new in the RL literature and only appeared in previous analysis of RLHF algorithms
\citep{chang2024dataset}. We call this local coverage condition since it only requires $\piref$ to cover the policies that is within some KL-divergence ball centered at $\piref$. The intuition of this assumption is,
for any algorithm that can control the reverse KL of the output policy,
we can leverage the coverage condition to relate the error under the 
output policy to its error under the offline distribution, and thus
guarantee its performance. 
Finally, we note that since the policies with bounded KL is a subset 
of all policies, for a fixed $\piref$, we always have $C_{\ekl} \leq \cglobal$.

\begin{remark}
Taking a closer look at \pref{assump:klball}, we can see that this assumption is always true in the sense that for any policy with $\ekl < \infty$, $\max_{x,y: \rho(x) > 0}\frac{\pi(y \mid x)}{\piref(y \mid x)} < \infty$, i.e., $C_{\ekl} <\infty,$ for any $\ekl$. However, while being bounded, $C_{\ekl}$ can be large. Indeed a simple calculation can show that $\max_{x,y: \rho(x) > 0}\frac{\pi(y \mid x)}{\piref(y \mid x)}$ can be 
as large as $\max_{x,y: \pi(y \mid x) > 0} \exp\prn*{\frac{\ekl}{\pi(y \mid x)}}$. This can be undesirable because this suggests bounded reverse KL itself 
is not enough to guarantee optimality: the error can have an 
\emph{exponential} amplification when switching from $\piref$ to $\pi$. 
Thus this motivates \pref{assump:klball}, which assumes that $C_{\ekl}$
is reasonably small.
\end{remark}

In what follows, we will show that the global coverage assumption 
(\pref{assump:global}) is necessary for
offline contrastive-based algorithms such as DPO and IPO, 
but partial coverage assumption such as \pref{assump:klball}
is sufficient for online RL based algorithms. This establishes a
separation between the two types of algorithms. We emphasize this 
theoretical separation explains why in practice online methods 
is less prone to problems such as reward hacking and producing out-of-distribution responses that are due to dataset with insufficient coverage.  

\subsection{Global Coverage is Necessary for Offline Contrastive Algorithms}\label{sec:necessary}

\paragraph{Failure of DPO Under Partial Coverage}
Now we show that if the strong coverage \pref{assump:global} breaks, then DPO can not 
guarantee any performance with respect to the objective function \pref{eq:objective}. 
The intuition is based on a rather common observation of the DPO algorithm: the 
DPO policy $\pidpo$ may generate out of distribution responses, while in contrast, RLHF does not generate responses outside of the support of $\piref$ due to online reverse-KL constraint. For example, \citep{xu2024dpo} provides a
construction where $\pidpo$ chooses a response where RLHF policy assigns 0 mass, thus proving that RLHF policies are a subset 
of DPO policies.

However, such construction assumes that the reward learning
 procedure 
of DPO makes arbitrarily large errors. Also, previous constructions assume 
deterministic preference, which is only true if the underlying reward function is
unbounded. This violates the natural assumption of \pref{assump:reward}.
In the following, we relax these constraints and thus
show that DPO fails to guarantee any performance in a rather strong sense.
Concretely, DPO constructs the following implicit reward class with the policy class $\Pi$:
$\Rdpo = \crl*{\beta \log \prn*{\frac{\pi(y \mid x)}{\piref(y \mid x) Z(x)}} \mid \pi \in \Pi}$, where $Z(x)$ is a partition function that maps context to a real number and is independent of $y$. Plugging this formulation into the BT loss (\pref{eq:bt_loss}) recovers exactly the DPO loss (\pref{eq:ldpo}) as the partition functions are canceled. Now we can characterize the returned policy by DPO as exactly whose corresponding reward function is accurate \emph{in distribution}: \looseness=-1

\begin{assumption}[In Distribution Reward Learning]
\label{assump:reward_learning}
We assume the DPO policy $\pidpo$ satisfies that:
\begin{align*}
 \EE_{x,y \sim \rho \circ \piref} \brk*{ \prn*{\beta \log \prn*{\frac{\pidpo(y \mid x)}{\piref(y \mid x) Z(x)}}-r^\ast(x,y)}^2} \leq \edpo.
\end{align*}
\end{assumption}

Note that this is a rather strong assumption for BT loss -- by \pref{lem:reward_relative}, at best one can only hope: for any learned reward function $\widehat r$,
for each context $x$, there exists a constant $c(x)$ such that \looseness=-1
\begin{align}\label{eq:reward_gap}
  \EE_{x,y \sim \rho \circ \piref} \brk*{ \prn*{\widehat r(x,y) - r^\ast(x,y) - c(x)}^2} \leq \varepsilon,
\end{align}
i.e., the reward model predicts the human reward up to a gap that is independent of $y$.
This is due to the fact that BT loss only requires the reward function to capture the 
relative difference, or in other word, any constant shift (with respect to context) in the reward will be canceled
in the BT loss. 
However, for the rest of the section, we will make the stronger learning assumption
that the gap $c(x) = 0$ (such as in the case of \pref{assump:reward_learning}). 
Previous counterexamples analysis 
violates this assumption, but we will show that even under this strong assumption,
DPO still can not guarantee any performance. 

\begin{proposition}\label{prop:dpo_partial}
Denote $\piref$ as \emph{any} reference policy such that \pref{assump:global} breaks. 
Let $\Pi_{\textsf{dpo}}$ be the set of DPO returned policies such that
\pref{assump:reward_learning} holds. Then there exists policy $\pi \in \Pi_{\textsf{dpo}}$
such that $J(\pi) = -\infty$.
\end{proposition}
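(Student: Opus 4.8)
The plan is to exploit the fact that \pref{assump:reward_learning} constrains the DPO implicit reward \emph{only on the support of} $\piref$, leaving the policy entirely unconstrained off-support; I will move a constant fraction of probability mass onto a response that $\piref$ never generates, sending the reverse KL to $+\infty$ while keeping the in-distribution reward error at zero. First I would unpack what it means for \pref{assump:global} to fail. Since the assumption quantifies over \emph{all} policies $\pi$, and $\sup_{\pi}\pi(y\mid x)=1$ for any fixed pair, global coverage is equivalent to $\piref(\cdot\mid x)$ having full support on $\Ycal$ for every $x$ with $\rho(x)>0$. Hence its breaking supplies a context $x^\ast$ with $\rho(x^\ast)>0$ together with a response $y_0$ such that $\piref(y_0\mid x^\ast)=0$.

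Let $\pi^\ast(y\mid x)\propto\piref(y\mid x)\exp(r^\ast(x,y)/\beta)$ denote the optimal policy with normalizer $Z^\ast(x)$; its DPO implicit reward equals $r^\ast$ exactly under the choice $Z(x)=1/Z^\ast(x)$. I would then modify $\pi^\ast$ only at $x^\ast$: fix $\alpha\in(0,1)$, set $\pi(y\mid x^\ast)=\alpha\,\pi^\ast(y\mid x^\ast)$ for $y\in\supp(\piref(\cdot\mid x^\ast))$, place the leftover $\pi(y_0\mid x^\ast)=1-\alpha$ on the uncovered response, and keep $\pi(\cdot\mid x)=\pi^\ast(\cdot\mid x)$ for $x\neq x^\ast$. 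The heart of the argument is that the DPO partition function is a free per-context parameter: taking $Z(x^\ast)=\alpha/Z^\ast(x^\ast)$ cancels the factor $\alpha$ in $\beta\log\frac{\pi(y\mid x^\ast)}{\piref(y\mid x^\ast)Z(x^\ast)}$, so the implicit reward still equals $r^\ast(x^\ast,y)$ on every in-support $y$. As the modification at $y_0$ carries zero $\piref$-mass, the expectation in \pref{assump:reward_learning} evaluates to $0$, and hence $\pi\in\Pi_{\textsf{dpo}}$.

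It remains to evaluate $J(\pi)$ in \eqref{eq:objective}. At $x^\ast$ we have $\pi(y_0\mid x^\ast)=1-\alpha>0$ while $\piref(y_0\mid x^\ast)=0$, so by \eqref{eq:reverse_KL} the $y_0$ summand of the reverse KL is $(1-\alpha)\ln\frac{1-\alpha}{\piref(y_0\mid x^\ast)}=+\infty$, whence $\KL(\pi(\cdot\mid x^\ast)\,\|\,\piref(\cdot\mid x^\ast))=+\infty$ (the remaining summands over $\supp(\piref(\cdot\mid x^\ast))$ are finite). Since $\rho(x^\ast)>0$, the reward contribution is bounded by \pref{assump:reward}, and the reverse KL is finite at every other context (where $\pi=\pi^\ast$ is supported inside $\piref$), the $x^\ast$ term contributes $-\beta\cdot\infty$ and therefore $J(\pi)=-\infty$.

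The main obstacle is conceptual rather than computational: one must recognize that the per-context partition-function freedom in the DPO reparameterization lets us rescale the on-support policy to liberate mass for an off-support response \emph{without} perturbing the in-distribution reward, so that the strong learning guarantee of \pref{assump:reward_learning} can coexist with an unbounded reverse KL. Once this is seen, the two verifications — zero reward error and infinite KL — are immediate.
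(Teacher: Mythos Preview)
Your proof is correct and follows essentially the same strategy as the paper: exploit that \pref{assump:reward_learning} constrains the implicit reward only on $\supp(\piref)$, then divert mass to an uncovered response so the reverse KL blows up. The paper carries this out on a concrete three-response promptless instance and uses the $\sqrt{\edpo}$ error budget to shrink the on-support probabilities below those of $\pi^\ast$, whereas you work in full generality and absorb the shrinkage into the free partition function $Z(x^\ast)=\alpha/Z^\ast(x^\ast)$, thereby achieving zero in-distribution reward error; this is a slightly cleaner packaging of the same idea.
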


\begin{proof}[Proof sketch]
   Without loss of generality, we consider a promptless setting, and assume that
  the response space is $\Ycal = \{y_1,y_2,y_3\}$. Again without loss of generality,
  we assume $\piref$ only covers $y_1$ and $y_2$, and thus \pref{assump:global} breaks. 
  We assume partition function $Z = 1$ for all $\pi$ 
  but we will be rigorous in the formal proof.
  Then consider the following policy $\pi$ such that
  \begin{align*}
    \beta \log \prn*{\frac{\pi(y_1)}{\piref(y_1)}} = r^\ast(y_1) - \sqrt{\edpo}, \mathand \beta \log\prn*{ \frac{\pi(y_2)}{\piref(y_2)} } = r^\ast(y_2) - \sqrt{\edpo},
  \end{align*}
  One can check $\pi$ satisfies \pref{assump:reward_learning}.
  Now consider the optimal policy $\pi^\ast(y_i) = \piref(y_i) \exp\prn*{\frac{1}{\beta}r^\ast(y_i)}$, 
  %\wen{partition function?}
  for $i \in \{1,2\}$, and $\pi^\ast(y_3) = 0$. Since $\pi^\ast(y_1) + \pi^\ast(y_2) = 1$,
  combining everything we get $\pi(y_3) > 0$, which implies $\KL\prn*{\pi || \piref}$ is unbounded, 
  thus we complete the proof. 
\end{proof}

One can first relate the above construction to the parital coverage assumption \pref{assump:klball}:
since the policy $\pi$ considered in the proof has unbounded reverse KL with respect to $\piref$,
thus it is not in the KL-ball of $\ekl$ around $\piref$, which implies that \pref{assump:klball}
is not sufficient for DPO. Next we show that global coverage is necessary for the IPO algorithm.

\paragraph{Failure of IPO Under Partial Coverage} 
To show that the global coverage is necessary for IPO, we
can even assume a stronger in-distribution learning guarantee,
that is, the returned policy achieves the smallest
error on its population loss 
in distribution. 
\begin{proposition}[Informal]\label{prop:ipo_partial}
Denote $\piref$ as \emph{any} reference policy such that \pref{assump:global} breaks.
Let $\Pi_{\textsf{ipo}}$ be the set of IPO returned policies such that it is the minimizer of 
in-distribution error on its population loss.
Then there exists policy $\pi \in \Pi_{\textsf{ipo}}$ such that $J(\pi) = -\infty$.
\end{proposition}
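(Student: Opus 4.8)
The plan is to mirror the construction used for \pref{prop:dpo_partial}, exploiting the fact that the IPO population objective, like the DPO objective, only constrains the on-support behaviour of $\pi$ through a \emph{difference} of log-ratios, while leaving the off-support mass completely free. First I would recall the promptless form of the IPO population loss: with $\mu = \piref$, sampling $y,y' \sim \piref$ i.i.d.\ and labelling the preference by $p^\ast$, the objective is
\begin{align*}
  \EE_{y,y' \sim \piref}\brk*{ p^\ast(y \succ y')\prn*{ h_\pi(y,y') - \tfrac{1}{2\beta}}^2 + p^\ast(y' \succ y)\prn*{ h_\pi(y',y) - \tfrac{1}{2\beta}}^2},
\end{align*}
where $h_\pi(y,y') = \log\frac{\pi(y)}{\piref(y)} - \log\frac{\pi(y')}{\piref(y')}$. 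Using $h_\pi(y',y) = -h_\pi(y,y')$, a one-line minimization over the scalar $h_\pi(y,y')$ shows the in-distribution minimizer satisfies $h_\pi(y,y') = \tfrac{1}{\beta}\prn*{p^\ast(y \succ y') - \tfrac12}$, a finite target since $p^\ast \in (0,1)$.

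Next I would instantiate the same counterexample as in the DPO case: a promptless setting with $\Ycal = \{y_1,y_2,y_3\}$ and $\piref$ supported only on $\{y_1,y_2\}$, so that $\piref(y_3) = 0$ and \pref{assump:global} breaks. The crucial structural observation is that the IPO population loss is computed from pairs $y,y' \sim \piref$, which never realize $y_3$; hence it is a function of the single quantity $h_\pi(y_1,y_2) = \log\frac{\pi(y_1)}{\piref(y_1)} - \log\frac{\pi(y_2)}{\piref(y_2)}$ alone. With a two-point support there is exactly one free difference and one (finite) target, so the minimizer attains the loss exactly; moreover $h_\pi(y_1,y_2)$ pins down only the \emph{ratio} $\pi(y_1)/\pi(y_2)$. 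I would then construct $\pi \in \Pi_{\textsf{ipo}}$ by choosing $\pi(y_1),\pi(y_2)$ with the minimizing ratio while assigning $\pi(y_3) > 0$: rescaling $\pi(y_1),\pi(y_2)$ by a common factor to make room for $\pi(y_3)$ leaves their log-ratio difference unchanged, so this $\pi$ remains an exact minimizer of the in-distribution population loss.

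Finally I would conclude that because $\pi(y_3) > 0 = \piref(y_3)$, the reverse KL $\KL(\pi \,\|\, \piref)$ is infinite, and since $r^\ast$ is bounded (\pref{assump:reward}) the reward term in $J(\pi)$ is finite; hence $J(\pi) = -\infty$ for this $\pi \in \Pi_{\textsf{ipo}}$, completing the argument. The main obstacle is the structural step rather than any computation: one must argue carefully that perturbing the off-support mass genuinely keeps $\pi$ in the minimizer set $\Pi_{\textsf{ipo}}$, i.e.\ that the minimal in-distribution loss is invariant to $\pi(y_3)$ because $h_\pi$ depends only on the on-support ratio and any common rescaling cancels in the difference. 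A secondary care-point is matching the exact IPO target form deferred to the appendix (the constant $\tfrac{1}{2\beta}$ and the resulting finite minimizer value), but since only \emph{finiteness} of the target and the freedom in $\pi(y_3)$ are used, the precise constant does not affect the conclusion.
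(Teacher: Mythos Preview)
Your proposal is correct and follows essentially the same approach as the paper: a promptless three-action setting with $\piref$ supported on $\{y_1,y_2\}$, observing that the IPO population loss constrains only the on-support log-ratio, then freeing mass onto $y_3$ to force infinite reverse KL. The paper additionally fixes $\piref(y_1)=\piref(y_2)=1/2$ so that $h_\pi(y_1,y_2)=\log(\pi(y_1)/\pi(y_2))$ directly, and states the minimizing target as $p^\ast(y_1\succ y_2)/\beta$ rather than your $(p^\ast(y_1\succ y_2)-\tfrac12)/\beta$; as you already note, only finiteness of this target matters, so the discrepancy is immaterial to the argument.
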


We defer the detailed setup and formal version to \pref{app:missing_proofs}, but the 
construction for the above proofs share the same intuition: 
the reverse KL term in the objective function
can be unbounded. 
For offline contrastive-based algorithms, the KL regularization is only enforced 
under the data distribution, and thus the algorithm can not guarantee bounded reverse KL
if the reference policy does not cover the response space well.
Although we only showed counterexamples for DPO and IPO, we conjecture that 
the same intuition holds for other offline contrastive-based algorithms.
One natural question at this point would be: how about the forward 
KL? Not surprisingly, the forward KL for DPO (but we conjecture for
other offline constructive-based methods as well) is vacuously large, 
and we formalize this result in \pref{sec:dpo_forward}.

\begin{remark}\label{remark:pointwise} The folklore that DPO is equivalent to RLHF is often based on some assumption that is much stronger than \pref{assump:reward_learning}: it requires that the learned policy has a point-wise accuracy guarantee  $\beta \ln ( \pidpo(y|x) / \piref(y|x)) = r^\ast(x,y)$ for all $x,y$. Such a point-wise guarantee is unrealistic in reality and does not hold in general in the supervised learning sense. The in-distribution style guarantee in \pref{assump:reward_learning} is the best one could hope for from a supervised learning algorithm.
\end{remark}

\neurips{\vspace{-2mm}}
\subsection{Global Coverage is Sufficient for Offline Contrastive Algorithms}
\neurips{\vspace{-1mm}}
After showing that global coverage is necessary for DPO to guarantee any performance,
we now show that it is sufficient for the performance guarantee. 
\begin{theorem} \label{thm:dpo_global}
  Let $\piref$ be any reference policy such that \pref{assump:global} holds. For any policy $\pidpo$ such that the event in \pref{assump:reward_learning}
  holds, we have that 
  \begin{align*}
    J(\pi^\ast) - J(\pidpo) = O(\cglobal \sqrt{\edpo}).
  \end{align*}
\end{theorem}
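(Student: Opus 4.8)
The plan is to exploit the fact that $\pidpo$ is, by construction, the \emph{exact} maximizer of a KL-regularized objective whose reward is its own implicit reward $\rdpo(x,y) := \beta\log\prn*{\pidpo(y\mid x)/(\piref(y\mid x)Z(x))}$. Rearranging this definition gives $\pidpo(y\mid x) \propto \piref(y\mid x)\exp(\rdpo(x,y)/\beta)$, which is precisely the closed-form Gibbs solution of $\max_\pi \EE_{y\sim\pi}[\rdpo(x,y)] - \beta\KL(\pi(\cdot\mid x)||\piref(\cdot\mid x))$. Writing $J_r(\pi) := \EE_{x\sim\rho}\brk*{\EE_{y\sim\pi}r(x,y) - \beta\KL(\pi(\cdot\mid x)||\piref(\cdot\mid x))}$ so that $J = J_{r^\ast}$, we then have $\pi^\ast = \argmax_\pi J_{r^\ast}(\pi)$ and $\pidpo = \argmax_\pi J_{\rdpo}(\pi)$, i.e. both policies are optimal for their respective rewards.

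First I would run the standard two-optimal-policies regret decomposition by inserting $\pm J_{\rdpo}$:
\begin{align*}
J_{r^\ast}(\pi^\ast) - J_{r^\ast}(\pidpo) = \underbrace{[J_{r^\ast}(\pi^\ast) - J_{\rdpo}(\pi^\ast)]}_{(\mathrm{I})} + \underbrace{[J_{\rdpo}(\pi^\ast) - J_{\rdpo}(\pidpo)]}_{(\mathrm{II})} + \underbrace{[J_{\rdpo}(\pidpo) - J_{r^\ast}(\pidpo)]}_{(\mathrm{III})}.
\end{align*}
Term $(\mathrm{II})$ is nonpositive because $\pidpo$ maximizes $J_{\rdpo}$. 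Crucially, in terms $(\mathrm{I})$ and $(\mathrm{III})$ the KL penalties cancel---each compares a \emph{fixed} policy under two different rewards against the same reference---leaving $(\mathrm{I}) = \EE_{x\sim\rho}\EE_{y\sim\pi^\ast}[r^\ast - \rdpo]$ and $(\mathrm{III}) = \EE_{x\sim\rho}\EE_{y\sim\pidpo}[\rdpo - r^\ast]$.

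Next I would convert each of these reward-estimation gaps---which live under the on-policy distributions $\pi^\ast$ and $\pidpo$---into the in-distribution error controlled by \pref{assump:reward_learning}. This is exactly where global coverage enters: since \pref{assump:global} gives $\pi(y\mid x)/\piref(y\mid x) \leq \cglobal$ for every policy (in particular $\pi^\ast$ and $\pidpo$), a pointwise-in-$x$ change of measure bounds $\EE_{y\sim\pi}\abs*{\rdpo - r^\ast} \leq \cglobal\,\EE_{y\sim\piref}\abs*{\rdpo - r^\ast}$. Cauchy--Schwarz (or Jensen) then gives $\EE_{x,y\sim\rho\circ\piref}\abs*{\rdpo - r^\ast} \leq \sqrt{\EE_{x,y\sim\rho\circ\piref}(\rdpo - r^\ast)^2} \leq \sqrt{\edpo}$. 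Bounding $(\mathrm{I})$ and $(\mathrm{III})$ this way and discarding $(\mathrm{II}) \leq 0$ yields $J(\pi^\ast) - J(\pidpo) \leq 2\cglobal\sqrt{\edpo} = O(\cglobal\sqrt{\edpo})$.

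The main conceptual step---and the part most easily overlooked---is recognizing that $\pidpo$ is the genuine KL-regularized optimizer for $\rdpo$, since this is what makes $(\mathrm{II})$ nonpositive and reduces the whole argument to an off-policy reward error; everything after is a mechanical change of measure plus Cauchy--Schwarz. The one place to be careful is the $0/0$ convention and the direction of the change of measure: global coverage must be invoked to ensure that wherever $\pi^\ast$ or $\pidpo$ place mass, $\piref$ does too, so that the on-policy error is genuinely dominated by the in-distribution error. Without \pref{assump:global} this step fails, consistent with the necessity direction established in \pref{prop:dpo_partial}.
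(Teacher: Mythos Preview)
Your proof is correct and follows essentially the same approach as the paper: the paper's \pref{lem:objective_decomposition} is exactly your three-term decomposition with $(\mathrm{II})\leq 0$ absorbed, and the remaining steps (change of measure via \pref{assump:global}, then Cauchy--Schwarz into \pref{assump:reward_learning}) match. The only cosmetic difference is that the paper keeps $(\mathrm{I})$ and $(\mathrm{III})$ together as a pairwise reward difference $r^\ast(x,y^1)-\rdpo(x,y^1)-r^\ast(x,y^2)+\rdpo(x,y^2)$ before applying Jensen and coverage, which lets the argument go through under the weaker relative-reward guarantee of \pref{lem:reward_relative} rather than the pointwise one you use.
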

\begin{proof}
By \pref{lem:objective_decomposition}, we have 
\neurips{
{
\begin{align*}
    J(\pi^\ast) - J(\pidpo) &\leq 
    \EE_{x \sim \rho} \EE_{y^1 \sim \pi^\ast(\cdot \mid x), y^2 \sim \pidpo(\cdot \mid x)} 
        \brk*{r^\ast(x,y^1) - \rdpo (x,y^1) - r^\ast(x,y^2) + \rdpo (x,y^2)} \\
    &\leq \sqrt{\EE_{x \sim \rho} \EE_{y^1 \sim \pi^\ast(\cdot \mid x), y^2 \sim \pidpo(\cdot \mid x)} 
    \brk*{ \prn*{r^\ast(x,y^1) - \rdpo(x,y^1) - r^\ast(x,y^2) + \rdpo(x,y^2)}^2}} \\
    &\leq \sqrt{\cglobal^2 \EE_{x \sim \rho} \EE_{y^1, y^2 \sim \piref(\cdot \mid x)} 
    \brk*{ \prn*{r^\ast(x,y^1) - \rdpo(x,y^1) - r^\ast(x,y^2) + \rdpo(x,y^2)}^2}},
\end{align*}
}
}
\arxiv{
\begin{align*}
    J(\pi^\ast) - J(\pidpo) &\leq 
    \EE_{x \sim \rho} \brk*{ \EE_{y^1 \sim \pi^\ast(\cdot \mid x), y^2 \sim \pidpo(\cdot \mid x)} 
        \brk*{r^\ast(x,y^1) - \rdpo (x,y^1) - r^\ast(x,y^2) + \rdpo (x,y^2)} } \\
    &\leq \sqrt{\EE_{x \sim \rho} \brk*{ \EE_{y^1 \sim \pi^\ast(\cdot \mid x), y^2 \sim \pidpo(\cdot \mid x)} 
    \brk*{ \prn*{r^\ast(x,y^1) - \rdpo(x,y^1) - r^\ast(x,y^2) + \rdpo(x,y^2)}^2}}} \\
    &\leq \sqrt{\cglobal^2 \EE_{x \sim \rho} \brk*{ \EE_{y^1, y^2 \sim \piref(\cdot \mid x)} 
    \brk*{ \prn*{r^\ast(x,y^1) - \rdpo(x,y^1) - r^\ast(x,y^2) + \rdpo(x,y^2)}^2}}},
\end{align*}
}
and we can complete the proof by plugging in the error guarantee from \pref{assump:reward_learning}.
\end{proof}

Note that as the proof suggests, the result holds with the more general reward learning guarantee as in 
\pref{lem:reward_relative} -- one only needs to be accurate in predicting the 
relative rewards between response pairs.

\subsection{Online RL Method Under Partial Coverage}
Finally, we contrast the previous negative results in \pref{sec:necessary} for offline contrastive-based algorithms
to a positive result for online RL-based algorithms, under the partial coverage 
setting. We will show that in general global coverage is not necessary for RLHF, i.e., it 
can guarantee performance under partial coverage. 
In fact, one might still be able to show an impossibility result for RLHF under partial coverage,
by reusing the same counterexample as in the previous section (c.r., \pref{prop:dpo_partial}). 
Concretely, as long as the learned reward $\widehat r (y_3) \to \infty$, $\pirlhf(y_3)$ will be $1$
and thus the reverse KL will be unbounded. However, this is a rather unrealistic scenario,
as the construction requires a reward model (e.g., a neural network) to output an unbounded value.
Thus this motivates the following assumption:
\begin{assumption}\label{assump:reward_bounded}
  For all learned reward model $\widehat r$ from the reward model class, we have that $\nrm*{\widehat r}_{\infty} \leq R'$.
\end{assumption} 
At this point, one might ask why a similar assumption is missing for the offline contrastive-based
analysis, since in \pref{remark:pointwise} we argued that a point-wise learning guarantee is unrealistic but \pref{assump:reward_bounded} is indeed also a point-wise boundedness assumption. The reason lies in the different construction of the model class $\widehat r$ for those algorithms:
for DPO and IPO, the reward model is constructed as $\rdpo = \beta \log \prn*{\frac{\pi}{\piref \cdot Z}}$,
and there is no natural function class for $\pi$ such that point-wise assumptions such as the one in \pref{remark:pointwise} or \pref{assump:reward_bounded} holds. 
In contrast, post-processing such as clipping, offline normalization and on-the-fly normalization of rewards is standard in practice, which means the policy will always witness bounded rewards \citep{ chang2023learning,chang2024dataset,gao2024rebel,  ahmadian2024back} during online RL training (e.g., PPO).  As we will show in the following, 
the difference in the reward function (which is tied to the offline vs. online 
nature of the algorithms) can explain the different coverage requirement 
of the algorithms. Note that we use the same in-distribution reward learning assumption for both types of methods. 

To relate to \pref{assump:klball}, we first show that the reverse KL divergence of the RLHF policy
is always bounded under \pref{assump:reward_bounded}.

\begin{lemma}\label{lem:rlhf_reverse_kl}
Suppose that \pref{assump:reward_bounded} holds. Then for any RLHF policy $\pirlhf$, we have that
\begin{align*}
  \KL(\pirlhf || \piref) := 
  \EE_{x \sim \rho} \brk*{ \EE_{y \sim \pirlhf(\cdot \mid x)} \brk*{\log \prn*{\frac{\pirlhf(y \mid x)}{\piref(y \mid x)}}}}
  \leq \frac{2R'}{\beta}.
\end{align*}
\end{lemma}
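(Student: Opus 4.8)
The plan is to exploit the fact that the policy optimization step defining $\pirlhf$ is an entropy-regularized (minimum relative entropy) problem whose maximizer has a well-known closed form, and then read the reverse KL off directly from that form. First I would observe that the RLHF objective decomposes across contexts: for each fixed $x$, the inner problem $\max_{p \in \Delta(\Ycal)} \EE_{y\sim p}[\widehat r(x,y)] - \beta\,\KL(p \,\|\, \piref(\cdot\mid x))$ is strictly concave in $p$, and its unique maximizer is the Gibbs (exponentially tilted) distribution
\[
  \pirlhf(y\mid x) = \frac{\piref(y\mid x)\exp(\widehat r(x,y)/\beta)}{Z(x)}, \qquad Z(x) = \sum_{y}\piref(y\mid x)\exp(\widehat r(x,y)/\beta).
\]
This is precisely the reparameterization of \citet{ziebart2008maximum} that underlies the DPO derivation, so I would invoke it directly rather than re-deriving it.

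Next I would take the log-ratio, which gives $\log\bigl(\pirlhf(y\mid x)/\piref(y\mid x)\bigr) = \widehat r(x,y)/\beta - \log Z(x)$, so that the per-context reverse KL simplifies to
\[
  \KL\bigl(\pirlhf(\cdot\mid x)\,\|\,\piref(\cdot\mid x)\bigr) = \tfrac{1}{\beta}\,\EE_{y\sim\pirlhf(\cdot\mid x)}\bigl[\widehat r(x,y)\bigr] - \log Z(x).
\]
I would then bound the two terms separately using \pref{assump:reward_bounded}. The first term is at most $R'/\beta$, since $\widehat r(x,y)\le R'$ pointwise. For the second, because $\widehat r(x,y)\ge -R'$ and $\piref(\cdot\mid x)$ sums to one, we have $Z(x)\ge \exp(-R'/\beta)$, hence $-\log Z(x)\le R'/\beta$. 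Adding the two bounds gives $\KL\bigl(\pirlhf(\cdot\mid x)\,\|\,\piref(\cdot\mid x)\bigr) \le 2R'/\beta$ for every $x$ with $\rho(x)>0$, and taking the expectation over $x\sim\rho$ yields the stated bound $\KL(\pirlhf\|\piref)\le 2R'/\beta$.

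I do not expect a genuine obstacle: the whole argument is an immediate consequence of the closed-form optimizer together with the boundedness of $\widehat r$. The only points needing a little care are (i) justifying that the per-context maximizer really is the Gibbs distribution, which follows from strict concavity of the entropy-regularized objective (and hence uniqueness of the maximizer), and (ii) the sign bookkeeping on the partition function, where the lower bound $Z(x)\ge e^{-R'/\beta}$ is what controls the $-\log Z(x)$ term. I would emphasize that the argument uses \emph{only} $\nrm*{\widehat r}_{\infty}\le R'$ and no coverage assumption whatsoever, which is exactly the contrast with the offline analysis that the surrounding section is building toward: online reverse-KL control is automatic once the reward witnessed during optimization is bounded.
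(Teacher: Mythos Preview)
Your proposal is correct and follows essentially the same approach as the paper: both exploit the closed-form Gibbs maximizer, rewrite the reverse KL as $\tfrac{1}{\beta}\EE_{\pirlhf}[\widehat r] - \log Z(x)$, and then bound the two pieces using $\|\widehat r\|_\infty\le R'$. The only cosmetic difference is in how the partition-function term is handled: you lower-bound $Z(x)\ge e^{-R'/\beta}$ directly from $\widehat r\ge -R'$, whereas the paper applies Jensen to get $\log Z(x)\ge \EE_{y\sim\piref}[\widehat r(x,y)/\beta]$ and then bounds the resulting difference of expectations by $2R'/\beta$; both routes land at the same bound.
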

Then we can show that the RLHF algorithm can guarantee performance under partial coverage:
\begin{theorem}\label{thm:rlhf_partial}
  Suppose that \pref{assump:reward_bounded} holds. Then for any reference policy $\piref$ for which 
  \pref{assump:klball} holds with $\ekl = \frac{2R'}{\beta}$, and any RLHF policy $\pirlhf$
  with $\widehat r$ such that (c.r. \pref{assump:reward_learning})
  \neurips{$\EE_{x,y \sim \rho \circ \piref} \brk*{ \prn*{ r^\ast(x,y) - \widehat r(x,y)}^2 }\leq \ereward,$}
  \arxiv{
  \begin{align*}
  \EE_{x,y \sim \rho \circ \piref} \brk*{ \prn*{ r^\ast(x,y) - \widehat r (x,y)}^2 }\leq \ereward,
  \end{align*} 
  }
    we have 
  \begin{align*}
    J(\pi^\ast) - J(\pirlhf) \leq O(C_{\ekl} \sqrt{\ereward}).
  \end{align*}
\end{theorem}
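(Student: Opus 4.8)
The plan is to mirror the proof of \pref{thm:dpo_global}, using the learned reward $\widehat r$ in place of the DPO implicit reward $\rdpo$, and to show that the only reason global coverage was needed there can be relaxed to the local \pref{assump:klball} here. The enabling observation is \pref{lem:rlhf_reverse_kl}: under \pref{assump:reward_bounded} it certifies $\KL(\pirlhf\|\piref)\le 2R'/\beta=\ekl$, so $\pirlhf$ lies inside the KL-ball around $\piref$ and \pref{assump:klball} applies to it with constant $C_{\ekl}$. This is exactly the ``online reverse-KL control'' the paper identifies as absent for DPO, and it is what will let $C_{\ekl}$ replace $\cglobal$.

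First I would invoke \pref{lem:objective_decomposition} with the reward $\widehat r$ and the policy $\pirlhf$, which is the exact KL-regularized maximizer of the $\widehat r$-objective and hence beats $\pi^\ast$ on it; since the KL penalties are identical under the true and learned objectives they cancel, leaving (writing $\delta:=r^\ast-\widehat r$)
\[
J(\pi^\ast)-J(\pirlhf)\le \EE_{x\sim\rho}\EE_{y^1\sim\pi^\ast(\cdot\mid x),\,y^2\sim\pirlhf(\cdot\mid x)}\brk*{\delta(x,y^1)-\delta(x,y^2)}.
\]
Next, by Jensen's inequality I pass to the root-mean-square, so that the linear difference becomes the nonnegative square $\prn*{\delta(x,y^1)-\delta(x,y^2)}^2$ under a square root.

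The key step is the change of measure from the sampling policies to $\piref$. Both are in the KL-ball: $\pirlhf$ by \pref{lem:rlhf_reverse_kl}, and $\pi^\ast$ because its closed form $\pi^\ast(\cdot\mid x)\propto\piref(\cdot\mid x)\exp(r^\ast(x,\cdot)/\beta)$ together with \pref{assump:reward} gives $\KL(\pi^\ast\|\piref)\le 2R/\beta\le\ekl$ in the realizable regime $R'\ge R$. Hence \pref{assump:klball} bounds both $\pi^\ast(y\mid x)/\piref(y\mid x)$ and $\pirlhf(y\mid x)/\piref(y\mid x)$ by $C_{\ekl}$ pointwise, and since the integrand is a square I can reweight both coordinates to $\piref$ at the cost of a factor $C_{\ekl}^2$:
\[
J(\pi^\ast)-J(\pirlhf)\le \sqrt{C_{\ekl}^2\,\EE_{x\sim\rho}\EE_{y^1,y^2\sim\piref(\cdot\mid x)}\brk*{\prn*{\delta(x,y^1)-\delta(x,y^2)}^2}}.
\]
Finally, expanding the square over the independent pair $y^1,y^2\sim\piref$ and discarding the nonpositive cross term leaves two diagonal terms, each equal to $\EE_{x,y\sim\rho\circ\piref}\brk*{\delta(x,y)^2}\le\ereward$, giving $J(\pi^\ast)-J(\pirlhf)\le C_{\ekl}\sqrt{2\ereward}=O(C_{\ekl}\sqrt{\ereward})$.

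The only genuinely new ingredient relative to \pref{thm:dpo_global} is the use of \pref{lem:rlhf_reverse_kl} in the change-of-measure step to certify that $\pirlhf$ never leaves the KL-ball, which is precisely what lets $C_{\ekl}$ replace $\cglobal$; everything else is routine. The main point requiring care is confirming that $\pi^\ast$ also lies in the ball (handled above via \pref{assump:reward}, cleanly whenever $R'\ge R$, the natural realizable setting) and checking that the Jensen and importance-weighting steps compose — both are valid precisely because the integrand is a square and therefore nonnegative, so reweighting can only inflate it by the coverage factor.
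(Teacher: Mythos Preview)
Your proof is correct and follows the same route as the paper: invoke \pref{lem:objective_decomposition}, apply Jensen, change measure from $(\pi^\ast,\pirlhf)$ to $(\piref,\piref)$ using the KL-ball coverage (enabled for $\pirlhf$ by \pref{lem:rlhf_reverse_kl}), and bound by the in-distribution reward error. You are in fact more careful than the paper's appendix proof, which glosses over the verification that $\pi^\ast$ also lies in the KL-ball (your realizable assumption $R'\ge R$ handles this cleanly) and which contains an apparent copy-paste typo writing $\cglobal$ where $C_{\ekl}$ is meant.
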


Conditioned on \pref{lem:rlhf_reverse_kl}, the proof of this theorem is similar to that of \pref{thm:dpo_global} so we 
defer it to \pref{app:missing_proofs}. Similar to \pref{thm:dpo_global}, we note that 
\pref{thm:rlhf_partial} holds under a weaker reward learning guarantee as in \pref{lem:reward_relative}. 
We also remark that as long as $\ekl$ is finite, $C_{\ekl}$ is finite,
so the bound is never vacuous. \emph{Since $C_{\ekl} \leq \cglobal$ for all $\ekl$, it indicates the regret bound of RLHF is never worse and can be much better than the regret bound of DPO.}
Combining \pref{thm:dpo_global} and \pref{thm:rlhf_partial}, we complete the separation result between offline contrastive methods and online RL methods. \looseness=-1 

A natural question at this point could be: can we further relax the local KL-ball coverage condition in \pref{assump:klball} to a single-policy coverage condition, i.e., just assuming $\max_{x,y} \pi^*(y|x) / \piref(y|x) \leq C$? Prior work \cite{zhan2023provable} shows that with explicit pessimism, it is possible. However, using pessimism makes the algorithm from \cite{zhan2023provable} not computationally tractable and hard to scale to LLM experiments. Our conjecture is that for the RLHF policy $\pirlhf$, it is not possible to achieve meaningful regret under the single policy coverage condition, due to KL not being strong enough to induce pessimism (i.e., bounded KL between $\pi$ and $\piref$ can still imply exponentially large density ratio $\pi / \piref$). Developing a lower bound for $\pirlhf$ under single policy coverage in this case can be an interesting future work.

\section{Hybrid Preference Optimization: Regularizing Offline Learning with Unlabeled Online Samples} \label{sec:hypo}

\begin{algorithm}[t]
  \caption{Hybrid Preference Optimization (HyPO)}
  \begin{algorithmic}[1]
      \Require Pretrained LLM $\pi_{\theta_0}$, 
      reference policy $\piref$,
      offline data $\Dcal$,
      learning rate $\alpha$,
      KL coefficient $\lambda$.
      \For{$t = 1, \dots, T$}
          \State Sample a minibatch of \textcolor{red}{offline} data $D_{\mathsf{off}} := \{x, y^+, y^-\} \sim \Dcal$.
          \State Compute DPO loss $\ell_{\mathsf{dpo}} := \sum_{x,y^+,y^- \in D_{\mathsf{off}}} \log \prn*{\sigma\prn*{\beta \log \prn*{\frac{\pi_{\theta_{t-1}}(y^+ \mid x)}{\piref(y^+ \mid x)}} - \beta \log \prn*{\frac{\pi_{\theta_{t-1}}(y^- \mid x)}{\piref(y^- \mid x)}}} } $.
          \State Sample (unlabeled) \textcolor{red}{online} data $D_{\mathsf{on}} := \{x, y\}$ where $x\sim \Dcal, y \sim \pi_{\theta_{t-1}}(x)$.
          \State Compute $\ell_{\mathsf{kl}} := \sum_{x,y\in D_{\mathsf{on}}} \log(\pi_{\theta_{t-1}}(y|x)) \cdot \mathrm{sg}\prn*{\log \prn*{\frac{(\pi_{\theta_{t-1}}(y|x))}{(\piref(y|x))}}}$.
          \State Update $\theta_t = \theta_{t-1} + \alpha \cdot \nabla_{\theta_{t-1}} \prn*{\ell_{\mathsf{dpo}} - \lambda \ell_{\mathsf{kl}}}$.
      \EndFor
  \Return{$\pi_{T}$.}
  \end{algorithmic}
  \label{alg:hypo}
\end{algorithm}

In this section, we will provide a practical algorithm that bridges the gap between
the offline contrastive-based algorithms and the online RL-based algorithms.
As we see in the previous sections, the difference between the two types of algorithms
is their reward model parametrization, and whether to perform online rollouts. In the following, we will show that these two properties are in fact tightly intervened with each other. 

Here we will focus on the DPO algorithm. One way to fix the issue of the unbounded reverse KL of DPO (which is caused by the 
unbounded reward model class)
is to consider the following ideal procedure: at the beginning of the 
algorithm, we first go through the policy class $\Pi$, and then we filter out all 
the policies such that $\KL(\pi || \piref) \geq \frac{2R'}{\beta}$, where $R'$ is the
boundedness of the reward function class for RLHF. Now applying the same analysis 
of \pref{thm:rlhf_partial}, we can show that this revised DPO algorithm can guarantee
performance under the partial coverage assumption, because 
now the \pref{lem:rlhf_reverse_kl}, a sufficient condition for \pref{thm:rlhf_partial}, is explicitly enforced by the 
constraints. 
We defer the detailed statement 
and analysis to \pref{sec:hypo_theory}. 
%\wen{need add some more context -- say that with this filtering step, the reward induced by the remaining policy $\pi$, i.e., $\beta \ln \pi/\piref$ is bounded by $2R'$ everywhere now.}

However, such a filtering procedure is not possible in practice, but we can instead 
consider the following constrained optimization problem: we call the 
definition of DPO loss in \pref{eq:ldpo}, we want to solve
\neurips{\vspace{-2mm}}
\begin{align}\label{eq:constrained_dpo}
  \max_{\pi} \ldpo(\pi) \quad \text{s.t.} \quad \KL(\pi || \piref) \leq \frac{2R'}{\beta},
\end{align}
using the KKT conditions, we can show that the following Lagrangian form is equivalent to
\pref{eq:constrained_dpo}:
\begin{align}\label{eq:lagrange_dpo}
  \max_{\pi} \ldpo(\pi) - \lambda \KL(\pi || \piref),
\end{align}
where $\lambda$ is the Lagrange multiplier. However, in reality, since we do not 
know the exact value of $R'$, we can consider setting $\lambda$ to be a hyperparameter. We present the pseudocode in \pref{alg:hypo}.
Note that due to the reverse KL term, the Hybrid Preference Optimization (HyPO) algorithm optimizes \pref{eq:lagrange_dpo} via both offline and online samples where the offline samples are used for constructing and optimizing $\ldpo$ (here $\sigma$ denotes the sigmoid function), and the online samples $y\sim \pi(\cdot \mid x)$ are for $\KL$ (i.e., $\ell_{kl}$). Note that regularizing with reverse KL via online samples is widely used in online RLHF (e.g., PPO \citep{stiennon2020learning}, APA \citep{zhu2023fine}, REBEL \citep{gao2024rebel}).
Here $\mathrm{sg}$ refers to the stop gradient operation, 
which is a common practice in optimizing reverse KL in the LLM fine-tuning 
setting \citep{ouyang2022training, vonwerra2022trl}.
\arxiv{We also remark that a few recent and concurrent works \citep{xu2024contrastive,fisch2024robust,liu2024provably} 
propose to regulate DPO with an additional \emph{forward KL} term, where 
the samples are directly from the reference policy, while 
HyPO uses reverse KL which requires sampling from the learned 
policy online. 
}Finally, previous iterative RLHF methods \citep{xiong2024iterative} can be interpreted as hybrid methods as well, but they require labeling online samples from an additional reward model while HyPO only requires unlabeled online samples.

\begin{table}
  \centering
  \caption{Results on TL;DR dataset. Winrate is evaluated by GPT4 and RM score is from the trained reward model. Experiments are repeated for 3 random seeds. Mean and standard deviation are reported. \looseness=-1}
  \begin{tabular}{ccccc}
    \toprule
    Model size & Algorithm & Winrate $(\uparrow)$ & RM score $(\uparrow)$ & $\KL(\pi || \piref) (\downarrow)$ \\
    \midrule
    \multirow{2}{*}{1.4B} &
    DPO & 42.17\% (2.5\%) & 0.16 (0.05) & 44.90 (1.29) \\
     &
    HyPO & \textbf{46.44\% (2.39\%)} & \textbf{0.37 (0.05)} & \textbf{27.07 (2.34)} \\
    \midrule
    \multirow{2}{*}{2.8B} &
    DPO & 44.39\% (0.4\%) & 2.43 (0.10) & 68.95 (3.08) \\
     &
    HyPO & \textbf{50.50\% (1.89\%)} & \textbf{2.51 (0.13)} & \textbf{48.98 (4.23)} \\
    \bottomrule
  \end{tabular}\label{table:results}
\end{table}

\begin{table}
\centering
\caption{Results on general chat benchmarks. We evaluate the base model (Meta-Llama-3-8B-Instruct), DPO-fine-tuned model, and HyPO-fine-tuned model.}
\begin{tabular}{c|ccc|cc}
\toprule
\multirow{2}{*}{Model} & \multicolumn{3}{c|}{MT-Bench} & \multicolumn{2}{c}{AlpacaEval 2.0} \\
 & 1st Turn & 2nd Turn & Average & LC Win Rate & Win Rate \\
\midrule
Meta-Llama-3-8B-Instruct \citep{meta2024introducing} & 8.31 & 7.89 & 8.10 & 26.0 & 25.3 \\
DPO-Llama-3 & 8.08 & 7.41 & 7.75 & 28.4 & 30.9 \\
HyPO-Llama-3 & 8.43 & 7.75 & 8.09 & 30.7 & 32.2 \\
\bottomrule
\end{tabular}\label{table:results_chat}
\end{table}

\begin{table}[t]
\caption{Results on Open LLM leaderboard. We evaluate the base model (Meta-Llama-3-8B-Instruct), DPO-fine-tuned model, and HyPO-fine-tuned model.}
\centering
\begin{tabular}{c|ccccc|c}
\toprule
Model & 
\makecell{MMLU\\(5-shot)} & 
\makecell{GSM8K\\(5-shot)} & 
\makecell{Arc\\(25-shot)} & 
\makecell{TruthfulQA\\(0-shot)} & 
\makecell{HellaSwag\\(10-shot)} & 
Average \\
\midrule
\makecell{Meta-Llama-3-8B-Instruct \\ \citep{meta2024introducing}}& 65.68 & 74.91 & 62.12  & 43.88 & 78.76 & 65.07 \\
DPO-Llama-3 & 65.82 & 73.62 & 63.14  & 45.02 & 79.1 & 65.34 \\
HyPO-Llama-3 & 65.74 & 73.84 & 62.71  & 45.55 & 79.74 & 65.51 \\
\bottomrule
\end{tabular}\label{table:results_nlp}
\end{table}

\arxiv{\subsection{Experimental Results}}
\neurips{\paragraph{Experimental Results}}

\subsubsection{Summarization}
Our first experiment is on the TL;DR dataset \citep{stiennon2020learning}. Our experiment 
setup mostly follows \citep{gao2024rebel}:  we use a maximum context
length of 512 and a maximum generation length of 53. We use Pythia 1.4B and Pythia 2.8B \citep{biderman2023pythia}
as the pre-trained model. For the supervised fine-tuning
(SFT) model, we train it over 1 epoch of the dataset with 
human reference responses as labels. We train the reward model on top of the SFT over 1 epoch of preference data. Both HyPO and DPO are trained over 1 epoch of preference data with Low-rank Adaptation (LoRA)
\citep{hu2021lora}. We defer more experiment details in \pref{sec:exp_details}. \looseness=-1

We summarize the results in \pref{table:results}: HyPO outperforms 
DPO in terms of GPT4 win-rage and reverse KL. Particularly, the significant reduction in reverse KL implies the impact of including a reverse KL term explicitly into the DPO objective.  While comparing with PPO (e.g., Table 1 in \citet{gao2024rebel}), HyPO's performance is still lower in winrate, HyPO does preserve the key advantages of DPO over PPO: we avoid training additional reward model and a value network. \loose

\subsubsection{General Chat}
In the general chat setting, the model is required to produce a response $y$ given user instruction $x$. We again follow the experiment setup in \citet{gao2024rebel}, where we finetune the Meta-Llama-3-8B-Instruct \citep{meta2024introducing} model on the ultrafeedback dataset \citep{cui2023ultrafeedback}. Due to the computation constrain, we follow the setup in \citet{gao2024rebel} where we only train the last 4 layers of the network for both HyPO and DPO.

For evaluation, we use the common metrics including AlpacaEval 2.0 \citep{dubois2024length}, MT-bench \citep{zheng2024judging} and Open LLM leaderboard tasks: MMLU \citep{hendrycks2020measuring}, GSM8K \citep{cobbe2021training}, Arc \citep{clark2018think}, TruthfulQA \citep{lin2021truthfulqa} and HellaSwag \citep{zellers2019hellaswag}. We provide the results for AlpacaEval and MT-bench in \pref{table:results_chat}, and the the results of the remaining tasks can be found in \pref{table:results_nlp}.

\subsubsection{HyPO Mitigates Overfitting in Contrastive Methods}
Since the offline contrastive based methods only work with a static offline dataset, the overfitting issue has been observed \citep{tang2024understanding}. In our last experiment, we show that HyPO can effectively address the overfitting issue by leveraging the unlabeled online data. We follow the setup of the summarization task with Pythia-2.8B base model. We train DPO and HyPO for 5 epochs respectively, and evaluate on the first 300 data in the validation dataset. We plot the validation KL in \pref{fig:overfitting}: we observe that HyPO is better at preventing the deviation from the reference policy caused by overfitting from training on excessive epochs, even though the methods theoretically both have KL regularization to the reference policy. 

\begin{figure}[t]
    \centering
    \includegraphics[width=0.3\linewidth]{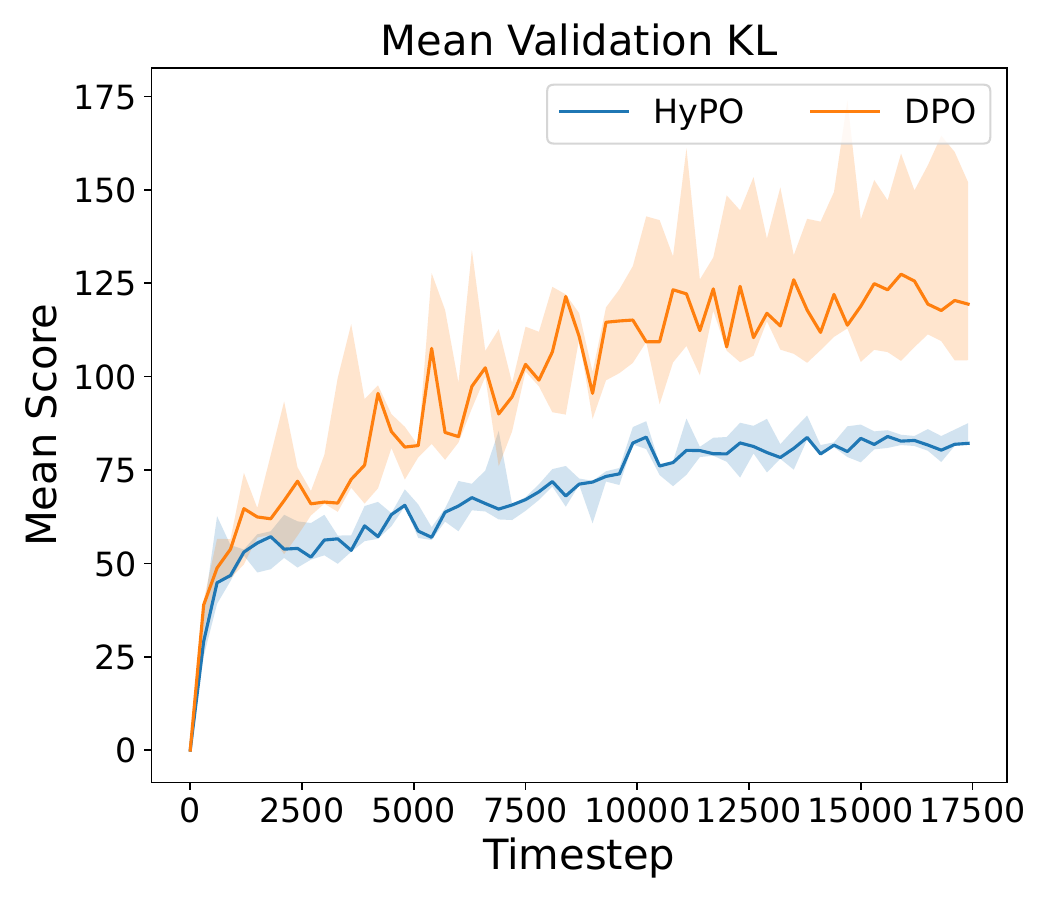}
    \caption{Mean validation reverse KL to the reference policy when DPO and HyPO are trained for 5 epoch on the TL;DR dataset. We repeat the experiment for 3 random seeds and plot the median and the shaded areas denote the min and max over the 3 repetitions.}
    \label{fig:overfitting}
\end{figure}

\neurips{\vspace{-2mm}}
\section{Function Approximation Coverage: Can
Fine-tuned Policies Extrapolate?}
\neurips{\vspace{-2mm}}

Our final result is a theoretical explanation of the extrapolation 
behavior of preference fine-tuning algorithms under the global coverage assumption in the function 
approximation (FA) setting. 
The extrapolation behavior refers to the phenomenon of RLHF algorithms (e.g., DPO) can improve SFT models despite the fact that during training the policies 
assign decreasing likelihood to both the preferred  and rejected responses
 (i.e., they must increase the likelihood of responses outside of the training data) \citep{pal2024smaug}.

A previous attempt \citep{rafailov2024r} to explain this behavior is based on the assumption that the responses from the reference policy have the same 
distribution as the \emph{preferred} responses from the dataset, i.e., 
$y^+ \sim \mu \stackrel{d}{=} y \sim \piref$. However, as mentioned in \pref{sec:pre}, more realistically, one should assume that 
$y \sim \mu \stackrel{d}{=} y \sim \piref$ since it is more natural to use the reference policy to generate pairs of responses to collect labels; or even more 
generally by considering 
$\supp(\Dcal) \subset \supp(\piref)$. The latter is common in practice, for example, 
the dataset is often precollected, or the reference policy might have a small mass on 
some responses, so with a high probability they are not sampled during the data collection process. 

In the following example, we illustrate this behavior using linear function approximation.  
We  use an offline dataset that does not contain the optimal action. We show that thanks to the linear function approximation and the dataset coverage, DPO has hope to extrapolate correctly, i.e.,  it can increase the model's likelihood of the optimal action while decreasing the likelihood of both the preferred and rejected actions from the offline data. 

\begin{example}
Consider a promptless setting, where the response space is $\Ycal = \{y_1,y_2,y_3\}$.
Consider the linear function approximation setting with feature map $\phi$, where $\phi(y_1) = [1,0],
\phi(y_2) = [1/2,1/2], \phi(y_3) = [0,1]$. Suppose all policies are parametrized as softmax linear policies, i.e., $\pi(y) \propto \exp(w^\top_\pi \phi(y))$. Let $w_{\mathsf{ref}} = [1,1]$, then we have $\piref(y_i) = 1/3, \forall i \in \{1,2,3\}$.

Consider the ground truth reward function $r^\ast(y) = [10,1]^{\top}\phi(y)$, 
and suppose $\supp(\mu) = \crl*{y_1,y_2}$, i.e., the data only covers $y_1$ and $y_2$. And as always, the preference is based on 
the ground truth reward function under the Bradley-Terry model.

We can first check that the data distribution indeed has global coverage in the 
linear function approximation case \citep{xiong2022nearly}, i.e., let $\Sigma_{\mu} = \EE_{y \sim \mu} \phi(y)\phi(y)^{\top}$,
then for all $\pi$, 
\begin{align*}
  \EE_{y \sim \pi} \|\phi(y)\|^2_{\Sigma_{\mu}^{-1}} \leq C_{\pi}.
\end{align*} 
If we parameterize
$\widehat  r (y) = \widehat {w}^\top \phi(y)$ (or in case of DPO, we can still check and see that $\rdpo (y) = \widehat {w_{\textsf{dpo}}}^\top \phi(y)$ because of the softmax linear parametrization of the policies), for either direct reward learning or DPO, we can have the learned reward function $\widehat  r(y) = [10,1]^{\top} \phi(y) + c$, where $c$ is the constant reward shift (c.r. \pref{eq:reward_gap}). Then a simple calculation 
(by $\pi(y) \propto \piref(y) \exp (\widehat  r(y)/\beta)$)
shows that, as long as $c$ is small enough, the policies will decrease the likelihood of
$y_1$ and $y_2$ and increase the likelihood of $y_3$.
\end{example} 

The above example shows when the training dataset together with the function approximation allow the learned function to generalize (e.g., learn a function that can predict well on test examples beyond the training data --- a property supervised learning can have), algorithms like DPO can extrapolate correctly, i.e., they can push up the likelihood of the optimal responses outside of the training data while pushing down the likelihood of all the responses in the training data.   

\neurips{
To validate our theory result, in \pref{sec:extrapolation_exp} we perform a synthetic experiment on
global coverage with linear function approximation. As shown in \pref{fig:extrapolation}, this extrapolation behavior is observed in 
both online RL method and DPO. 

To further demonstrate the importance of function approximation and generalization, we conduct the same experiments but without the linear function approximation (i.e., treat each action as independent just like one would do in the classic multi-armed bandit setting). In this case, trying one action does not give us information about the other action. 
We see that in this case, DPO can erroneously
assign a higher probability to unseen suboptimal responses instead of the unseen optimal response, indicating DPO can fail to extrapolate and generalize correctly. 
Our investigation identifies that function approximation and generalization play an important role in the success of RLHF and DPO algorithms.
}

\arxiv{
\subsection{Synthetic Experiment for Extrapolation} \label{sec:extrapolation_exp}

To validate our theory result, in this section we perform a synthetic experiment on
global coverage with linear function approximation. As shown in \pref{fig:extrapolation}, this extrapolation behavior is observed in 
both online RL method and DPO. 

\subsubsection{Extrapolation with Function Approximation}

We first describe our experiment setup. We consider linear function 
approximation setting where we have 100 responses ($|\Ycal| = 100$).
We consider a 16-dimensional feature vector $\phi: \Ycal \to \RR^{16}$,
and we generate $\phi(y)$ by simply sampling 99 random 16-dimensional 
vectors where the $\ell_1$ norm of each vector is 1. We add one final 
$\phi(y) = [1,0,0,\dots]$. We construct the implicit human reward $r^\ast (y) = {w^\ast}^{\top} \phi(y)$, where $w^\ast = [5,...]$, and the rest of the entries are 
sampled from Unif(-2,2). We parametrize the policies as softmax linear policies, i.e., 
we parametrize each policy $\pi$ with $w^\pi \in \RR^{16}$
such that $\pi(y) = \frac{{w^\pi}^\top \phi(y)}{\sum_{y \in \Ycal}
{w^\pi}^\top \phi(y)}$. One can check in this formulation the 
implicit reward in DPO ($\rdpo$) is linear in $\phi$. We generate 10000 preference pairs, according to the BT model 
under $r^\ast$, for the first 50 responses. We checked that the 
first responses indeed span $\RR^{16}$. Thus the offline data has 
global coverage in linear function approximation setting. 

For on-policy RL methods, we first train a reward model. Then we 
simply perform gradient descent on the KL-regularized bandit loss 
(we assume $\piref$ is uniform). For DPO, we simply perform SGD 
on the offline preference dataset. We track two qualities over 
the training: the mean log probability of a random subset of preferred 
responses, and the log probability of best response $\phi(y) = [1,0,0,\dots]$. We plot the results in \pref{fig:extrapolation} (Left and middle).
We observe that both methods have the extrapolation behavior --
the probability of preferred responses decays but the probability of 
the optimal response goes up. \loose

\begin{figure}
    \centering
    \includegraphics[width=0.3\linewidth]{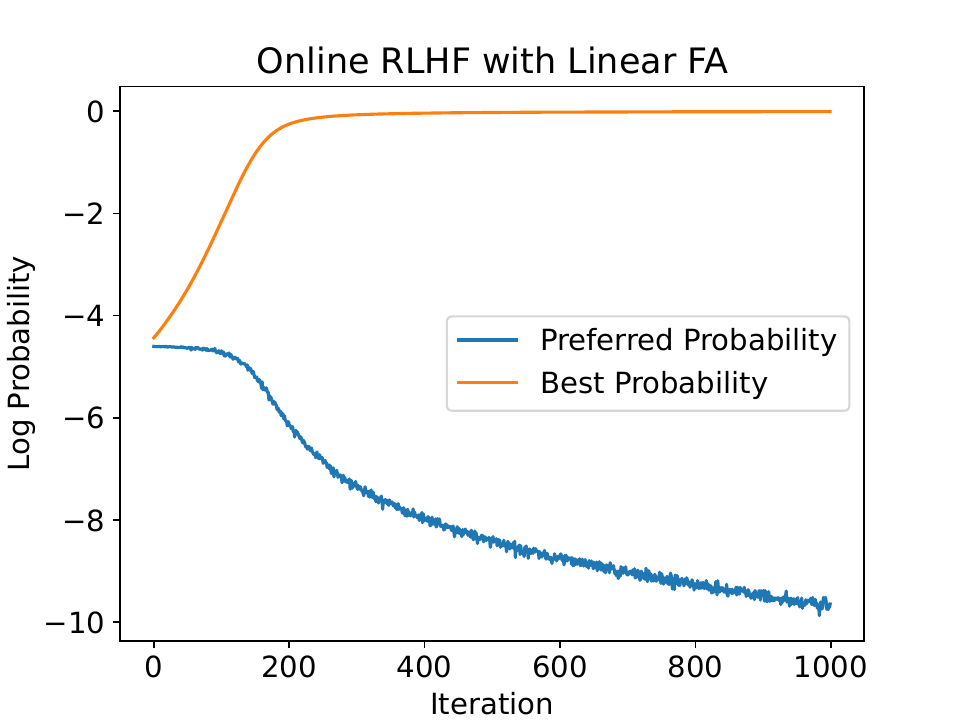}
    \includegraphics[width=0.3\linewidth]{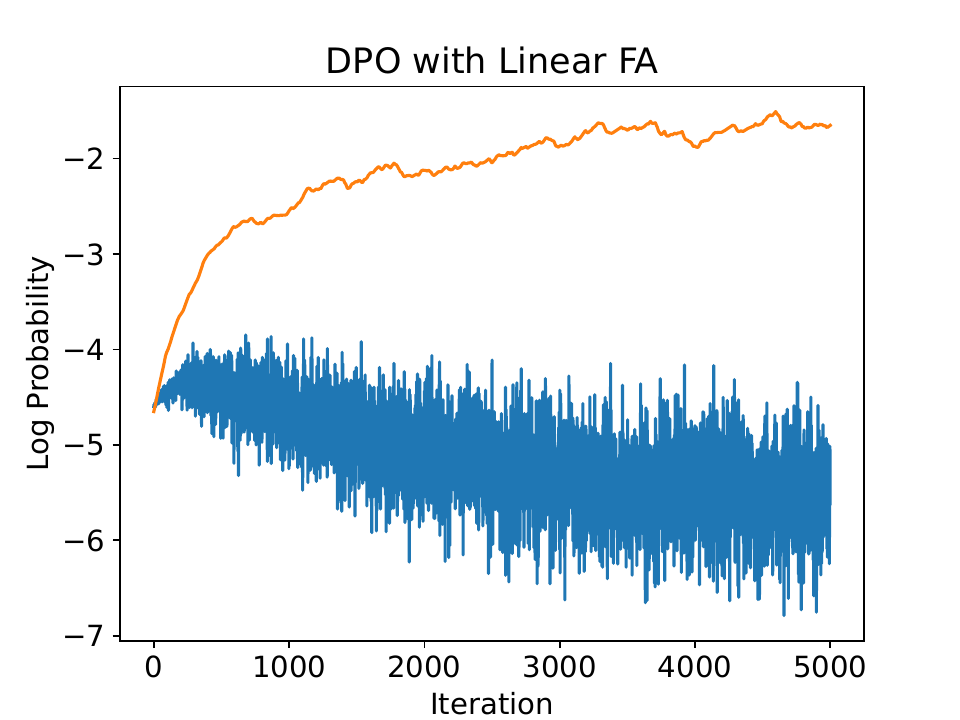}
    \includegraphics[width=0.3\linewidth]{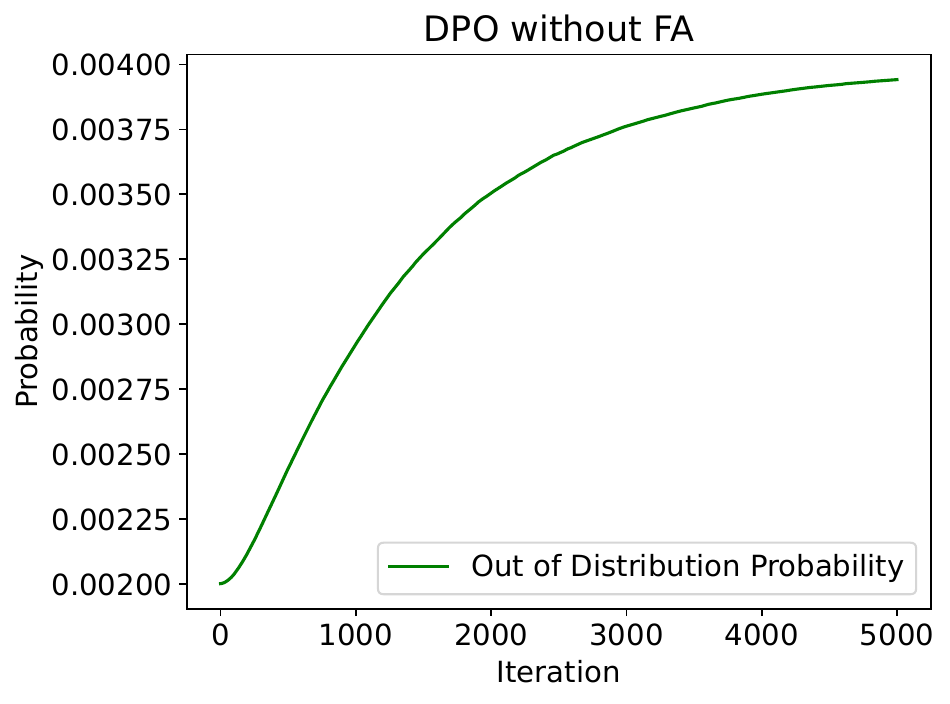}
    \caption{Left and middle: Extrapolation behavior of Online RL method and DPO under linear  function 
    approximation (FA). We plot the mean log probability of the preferred
    responses and the log probability of the best response, which is 
    unseen in the training data. We see that both algorithms correctly 
    assigns increasing probability to the best response.
    Right: Extrapolation behavior of DPO without function approximation. We plot the average probability of out-of-distribution responses along the training and DPO assigns increasing probability to out-of-distribution responses.}
    \label{fig:extrapolation}
\end{figure}

\subsubsection{Extrapolation without Function Approximation}
To further demonstrate the importance of function approximation and generalization, we conduct the same experiments but without the linear function approximation (i.e., treat each action as independent just like one would do in the classic multi-armed bandit setting). In this case, trying one action does not give us information about the other action. 
We see that in this case, DPO can erroneously
assign a higher probability to unseen suboptimal responses instead of the unseen optimal response, indicating DPO can fail to extrapolate and generalize correctly. 
Our investigation identifies that function approximation and generalization play an important role in the success of RLHF and DPO algorithms.

Now we describe the setting where function approximation fails, and this reduces 
to a Multi-arm bandit setting. We set $|\Ycal| = 500$, and the offline data only 
covers the first half of the responses. The $r^\ast(y)$ is set by sampling 
from Unif(-10,10), and we generate 10000 offline samples by uniformly sampling 
pairs of responses from the first half of the response space, and then labeling 
them with BT model under $r^\ast$. We train DPO with 5000 iterations, and plot 
the mean probability of the responses \emph{outside} of the data support in \pref{fig:extrapolation} (Right): we observe that the mean probability of the out-of-distribution responses is increasing, however, this could be an undesirable behavior because the reward of the out-of-distribution responses could be 
arbitrarily bad. 
}

\neurips{\vspace{-2mm}}
\section{Discussion}
\neurips{\vspace{-2mm}}

\neurips{
There are a few limitations of our work:
1) our theoretical analysis only considers the statistical perspective of 
each algorithm, but we believe our result is complementary to the 
other work that considers the optimization perspectives \citep{tajwar2024preference}. 2) we only conduct experiments on 
limited models and benchmarks. 
3) The experiment result shows that HyPO's performance is still below the one of online RLHF: this might suggest that our theory does not fully explain the benefit 
of all the components of online RLHF. For example, one hypothesis is that 
the learn reward function may have better generalization ability. 
4) It is not clear that the KL-ball coverage is necessary for online RL-based methods. 
However, as we discussed, since a bounded reverse KL might still induce exponential 
error amplification, we conjecture that at least the single policy coverage \citep{zhan2022offline} is not sufficient for online RLHF-based methods that use reverse KL. We believe these limitations lead to several interesting further directions. Finally, our method may not 
explicitly address the potential hallucinations or toxic behavior of LLMs, which 
is a common shortcoming of general-purpose fine-tuning algorithms.
}

\arxiv{
There are a few limitations of our work:
\begin{itemize}
    \item Our theoretical analysis only considers the statistical perspective of 
each algorithm, but we believe our result is complementary to the 
other work that considers the optimization perspectives \citep{tajwar2024preference}.
\item The experiment result shows that HyPO's performance is still below the one of online RLHF: this might suggest that our theory does not fully explain the benefit 
of all the components of online RLHF. For example, one hypothesis is that 
the learn reward function may have better generalization ability. 
\item It is not clear that the KL-ball coverage is necessary for online RL-based methods. 
However, as we discussed, since a bounded reverse KL might still induce exponential 
error amplification, we conjecture that at least the single policy coverage \citep{zhan2022offline} is not sufficient for online RLHF-based methods that use reverse KL.
\item We only use the online sample for regularizing reverse KL. One might 
also leverage the online sample to query new preference information 
or perform length control. 
\end{itemize}

We believe these limitations lead to several interesting further directions. Finally, our method may not 
explicitly address the potential hallucinations or toxic behavior of LLMs, which 
is a common shortcoming of general-purpose fine-tuning algorithms.
}

\arxiv{

\section*{Acknowledgments}
YS thanks Audrey Huang for the valuable discussions. AS and YS acknowledge and thank the support of ONR grant N000142212363 and NSF AI Institute for Societal Decision Making AI-SDM grant IIS2229881. WS acknowledges the support of NSF grant IIS-2154711 and NSF CAREER grant 2339395.
}

\newpage

% \neurips{
%   \bibliographystyle{neurips_2024}
% }
\neurips{
\bibliographystyle{abbrv}  
}
\bibliography{refs} 
% \icml{
% \bibliographystyle{icml2024}
% }
\newpage
\appendix
\section{Auxiliary Lemmas}

\begin{lemma}[Objective decomposition]\label{lem:objective_decomposition}
    Let $J(\pi)$ be the objective function defined in \eqref{eq:objective}, and 
    for reward function $\hat r$, we let 
    \begin{align}\label{eq:optimal_policy}
        \hat \pi \in \argmax_{\pi} \EE_{x \sim \rho} \brk*{\EE_{y \sim \pi(\cdot \mid x)} [\hat{r}(x,y)] - \beta \KL\prn*{\pi(\cdot \mid x) || \piref(\cdot \mid x)}},
    \end{align}
    then we have 
    \begin{align*}
        J(\pi^\ast) - J(\hat \pi) \leq 
        \EE_{x \sim \rho} \brk*{ \EE_{y^1 \sim \pi^\ast(\cdot \mid x), y^2 \sim \hat \pi(\cdot \mid x)} 
        \brk*{r^\ast(x,y^1) - \hat r(x,y^1) - r^\ast(x,y^2) + \hat r(x,y^2)}}.
    \end{align*}
\end{lemma}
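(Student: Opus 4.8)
The plan is to run the standard reward-estimation-error decomposition used in RL with learned rewards. Writing $J_r(\pi) := \EE_{x\sim\rho}\brk*{\EE_{y\sim\pi(\cdot\mid x)}[r(x,y)] - \beta\KL(\pi(\cdot\mid x) || \piref(\cdot\mid x))}$ for the KL-regularized objective induced by an \emph{arbitrary} reward $r$, so that $J = J_{r^\ast}$ and $\hat\pi$ is by definition a maximizer of $J_{\hat r}$, I would insert and subtract the two cross terms $J_{\hat r}(\pi^\ast)$ and $J_{\hat r}(\hat\pi)$ to obtain the telescoping identity
\begin{align*}
J(\pi^\ast) - J(\hat\pi) = \underbrace{\prn*{J_{r^\ast}(\pi^\ast) - J_{\hat r}(\pi^\ast)}}_{(\mathrm{I})} + \underbrace{\prn*{J_{\hat r}(\pi^\ast) - J_{\hat r}(\hat\pi)}}_{(\mathrm{II})} + \underbrace{\prn*{J_{\hat r}(\hat\pi) - J_{r^\ast}(\hat\pi)}}_{(\mathrm{III})}.
\end{align*}

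First I would dispatch term $(\mathrm{II})$: since $\hat\pi$ maximizes $J_{\hat r}$ by \eqref{eq:optimal_policy}, we have $(\mathrm{II}) \le 0$, so it may be dropped, which is precisely where the stated equality weakens to an inequality. Next, for terms $(\mathrm{I})$ and $(\mathrm{III})$, the crucial observation is that the reverse-KL penalty $\beta\KL(\pi(\cdot\mid x) || \piref(\cdot\mid x))$ does not depend on the reward function, so it cancels identically when differencing two objectives evaluated at the \emph{same} policy. Hence
\begin{align*}
(\mathrm{I}) = \EE_{x\sim\rho}\EE_{y\sim\pi^\ast(\cdot\mid x)}\brk*{r^\ast(x,y) - \hat r(x,y)}, \qquad (\mathrm{III}) = \EE_{x\sim\rho}\EE_{y\sim\hat\pi(\cdot\mid x)}\brk*{\hat r(x,y) - r^\ast(x,y)}.
\end{align*}

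Finally I would merge these two expectations, relabeling the dummy response in $(\mathrm{I})$ as $y^1\sim\pi^\ast(\cdot\mid x)$ and in $(\mathrm{III})$ as $y^2\sim\hat\pi(\cdot\mid x)$ and pairing them under a single product expectation, yielding exactly $\EE_{x\sim\rho}\brk*{\EE_{y^1\sim\pi^\ast(\cdot\mid x),\, y^2\sim\hat\pi(\cdot\mid x)}\brk*{r^\ast(x,y^1) - \hat r(x,y^1) - r^\ast(x,y^2) + \hat r(x,y^2)}}$, which is the claim.

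I do not expect a genuine obstacle here: the entire content is the cancellation of the KL terms across the two rewards together with the optimality of $\hat\pi$, and the result is really an exact performance-difference identity whose single inequality comes solely from discarding the nonpositive optimality gap $(\mathrm{II})$. The only point I would state with care is that the penalty is policy-dependent but reward-independent, so that subtracting $J_{\hat r}$ from $J_{r^\ast}$ at a fixed policy leaves precisely a reward difference; this is what makes the decomposition collapse to the two reward-error terms with no residual KL contribution.
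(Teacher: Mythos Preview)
Your proposal is correct and follows essentially the same approach as the paper: the paper also adds and subtracts $J_{\hat r}(\hat\pi)$, invokes the optimality of $\hat\pi$ for $J_{\hat r}$ to obtain the single inequality, cancels the reward-independent KL terms, and then merges the two remaining reward-error expectations into a joint $(y^1,y^2)$ expectation. Your telescoping presentation with the named terms $(\mathrm{I}),(\mathrm{II}),(\mathrm{III})$ is a slightly cleaner packaging of the identical argument.
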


\begin{proof}
We have 
\begin{align*}
    &J(\pi^\ast) - J(\hat \pi) \\ =& \EE_{x \sim \rho} \brk*{\EE_{y \sim \pi^\ast(\cdot \mid x)} [r^\ast(x,y)] - \beta \KL(\pi^\ast(\cdot \mid x) || \piref(\cdot \mid x))} - \EE_{x \sim \rho} \brk*{ \EE_{y \sim \hat \pi(\cdot \mid x)} [r^\ast(x,y)]  + \beta \KL(\hat \pi(\cdot \mid x) || \piref(\cdot \mid x)) }\\
    =& \EE_{x \sim \rho} \brk*{\EE_{y \sim \pi^\ast(\cdot \mid x)} [r^\ast(x,y)] - \beta \KL(\pi^\ast(\cdot \mid x) || \piref(\cdot \mid x))}- \prn*{\EE_{x \sim \rho} \brk*{ \EE_{y \sim \hat \pi(\cdot \mid x)} [ r^\ast(x,y)]  - \beta \KL(\hat \pi(\cdot \mid x) || \piref(\cdot \mid x))}} \\
    &+ \EE_{x \sim \rho} \brk*{\EE_{y \sim \hat \pi(\cdot \mid x)} [\hat r(x,y)] - \beta \KL(\hat \pi(\cdot \mid x) || \piref(\cdot \mid x))} - \prn*{ \EE_{x \sim \rho} \brk*{ \EE_{y \sim \hat \pi(\cdot \mid x)} [\hat r(x,y)]  - \beta \KL(\hat \pi(\cdot \mid x) || \piref(\cdot \mid x)) }} \\
    \leq& \EE_{x \sim \rho} \brk*{ \EE_{y \sim \pi^\ast(\cdot \mid x)} [r^\ast(x,y)] - \beta \KL(\pi^\ast(\cdot \mid x) || \piref(\cdot \mid x))} -\prn*{\EE_{x \sim \rho} \brk*{ \EE_{y \sim \pi^\ast(\cdot \mid x)} [\hat r(x,y)]  - \beta \KL(\pi^\ast(\cdot \mid x) || \piref(\cdot \mid x))}} \\
    &+ \EE_{x \sim \rho} \brk*{ \EE_{y \sim \hat \pi(\cdot \mid x)} [\hat r(x,y)] - \beta \KL(\hat \pi(\cdot \mid x) || \piref(\cdot \mid x))} - \prn*{ \EE_{x \sim \rho} \brk*{\EE_{y \sim \hat \pi(\cdot \mid x)} [ r^\ast(x,y)]  - \beta \KL(\hat \pi(\cdot \mid x) || \piref(\cdot \mid x)) }} \\
    =& \EE_{x \sim \rho} \brk*{ \EE_{y \sim \pi^\ast(\cdot \mid x)} [r^\ast(x,y) - \hat r(x,y)]} - \EE_{x \sim \rho} \brk*{\EE_{y \sim \hat \pi(\cdot \mid x)} [r^\ast(x,y) - \hat r(x,y)]},
\end{align*}
where the inequality is due to \pref{eq:optimal_policy}. To complete the proof, note that 
\begin{align*}
    &\EE_{x \sim \rho} \brk*{ \EE_{y \sim \pi^\ast(\cdot \mid x)} [r^\ast(x,y) - \hat r(x,y)] - \EE_{x \sim \rho} \EE_{y \sim \hat \pi(\cdot \mid x)} [r^\ast(x,y) - \hat r(x,y)] }\\
    =& \EE_{x \sim \rho} \brk*{ \EE_{y^1 \sim \pi^\ast(\cdot \mid x), y^2 \sim \hat \pi(\cdot \mid x)}[r^\ast(x,y^1) - \hat r(x,y^1)]} - \EE_{x \sim \rho} \brk*{ \EE_{y^1 \sim \pi^\ast(\cdot \mid x), y^2 \sim \hat \pi(\cdot \mid x)}[r^\ast(x,y^2) - \hat r(x,y^2)]} \\
    =& \EE_{x \sim \rho} \brk*{ \EE_{y^1 \sim \pi^\ast(\cdot \mid x), y^2 \sim \hat \pi(\cdot \mid x)}
    \brk*{r^\ast(x,y^1) - \hat r(x,y^1) - r^\ast(x,y^2) + \hat r(x,y^2)}}.
\end{align*}
\end{proof}

\begin{lemma}[Lemma C.2 from \citep{chang2024dataset}]
    \label{lem:reward_relative}
    Assume that $r^\ast$ is bounded, let $\Rcal$ be the reward function class, and
    Let
    \begin{align*}
        \hat{r} = \argmin_{r \in \Rcal}  \hat \EE_{x,y^+, y^- \sim \Dcal} \brk*{ \log 
        \prn*{\frac{\exp( r(x,y^+))}{\exp( r(x,y^+)) + \exp( r(x,y^-))}}},
    \end{align*}
    then we have with probability at least $1 - \delta$ that
    \begin{align*}
      \EE_{x,y^1,y^2 \sim \mu \circ \piref} \brk*{ \prn*{ r^\ast(x,y^1) - r^\ast(x,y^2) - \hat r(x,y^1) + \hat r(x,y^2)}^2 }\leq \frac{c \kappa^2 \log(|\Rcal| / \delta)}{N},
    \end{align*} 
    where $\kappa$ measures the non-linearity of the link function, and $c$ is a constant, $N := |\Dcal|$
    is the size of the offline dataset.
  \end{lemma}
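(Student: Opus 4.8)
The plan is to recognize $\hat r$ as the maximum-likelihood estimator (MLE) for a conditional binary model and then invoke the standard finite-class MLE guarantee. Under the Bradley--Terry model, each preference triple $(x,y^1,y^2)$ carries a Bernoulli label indicating which response is preferred, with success probability $p_r(x,y^1,y^2) = \sigma\prn*{r(x,y^1) - r(x,y^2)}$, where $\sigma$ is the sigmoid. The objective in the lemma is exactly the empirical log-likelihood of this label under $p_r$, so $\hat r$ maximizes the likelihood over $\Rcal$; realizability holds since $r^\ast \in \Rcal$ induces the data-generating probability $p^\ast = p_{r^\ast}$.

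First I would apply a classical MLE deviation bound for a finite class: with probability at least $1-\delta$, the learned model is close to the truth in expected squared Hellinger distance,
\[
  \EE_{x,y^1,y^2}\brk*{\mathsf{H}^2\prn*{p_{\hat r}(\cdot \mid x,y^1,y^2),\, p^\ast(\cdot \mid x,y^1,y^2)}} \lesssim \frac{\log(|\Rcal|/\delta)}{N},
\]
where the expectation is over the offline sampling distribution of the pair $(y^1,y^2)$. This step is entirely standard (a one-step MLE argument combined with a union bound over $\Rcal$) and uses no property of the link function beyond measurability.

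The second, and more delicate, step converts closeness in Hellinger distance between the two Bernoullis into closeness of their logits, i.e.\ of the reward \emph{gaps} $z^\ast := r^\ast(x,y^1) - r^\ast(x,y^2)$ and $\hat z := \hat r(x,y^1) - \hat r(x,y^2)$. Here I would invoke \pref{assump:reward} (and the analogous boundedness of $\Rcal$): since all rewards lie in $[-R,R]$, both logits lie in $[-2R,2R]$, on which $\sigma'$ is bounded below by a positive constant. A second-order expansion of the Bernoulli Hellinger distance in the logit then yields the pointwise comparison $(z^\ast - \hat z)^2 \le \kappa^2\, \mathsf{H}^2\prn*{p_{\hat r}, p^\ast}$, where $\kappa$ absorbs $1/\sigma'(2R)$ and captures precisely the \emph{non-linearity of the link function}. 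Taking expectations and chaining with the MLE bound gives
\[
  \EE_{x,y^1,y^2}\brk*{\prn*{r^\ast(x,y^1) - r^\ast(x,y^2) - \hat r(x,y^1) + \hat r(x,y^2)}^2} \le \frac{c\,\kappa^2 \log(|\Rcal|/\delta)}{N},
\]
which is the claim.

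I expect the main obstacle to be this second step: controlling the constant $\kappa$ and making the quadratic lower bound on the divergence uniform. The quadratic comparison between logit gap and statistical distance degenerates as $|z| \to \infty$ (since $\sigma'(z) \to 0$), so boundedness of the rewards is \emph{essential} --- without it $\kappa$ blows up and the bound is vacuous. A secondary subtlety is bookkeeping which divergence the finite-class MLE bound produces (squared Hellinger) and ensuring the logit comparison is stated for that same divergence rather than $\TV$ or $\KL$, since the constants differ. The generalization (first) step is routine once realizability and boundedness are in hand.
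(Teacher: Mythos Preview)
The paper does not actually prove this lemma: it is stated as an auxiliary result and attributed verbatim to \citep{chang2024dataset} (``Lemma C.2''), with no proof given in the present paper. So there is no in-paper argument to compare against.

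That said, your proposal is the standard route for this kind of statement and is correct. The two-step structure --- (i) a finite-class MLE concentration bound yielding control of the expected squared Hellinger distance between the learned and true conditional Bernoullis, and (ii) a link-function inversion that converts Hellinger closeness of $\sigma(z^\ast)$ and $\sigma(\hat z)$ into closeness of the logits $z^\ast,\hat z$, paying the factor $\kappa \asymp 1/\inf_{|z|\le 2R}\sigma'(z)$ --- is exactly how the cited result (and its antecedents) are established. Your identification of reward boundedness as the load-bearing assumption in step (ii) is accurate: without it $\sigma'$ is not bounded away from zero and $\kappa$ degenerates. The only minor caution is to be consistent about which divergence the MLE step delivers; the usual statement gives squared Hellinger (or a quantity sandwiched between it and KL), and the pointwise comparison $(z^\ast-\hat z)^2 \lesssim \kappa^2\,\mathsf{H}^2$ for two Bernoullis with bounded logits is elementary, so the chaining goes through without issue.
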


\section{Additional Results}
\subsection{Results for IPO}\label{app:ipo}
In this section we give detailed technical details for IPO, and the negative results 
for IPO under partial coverage. Recall that the empirical objective of IPO is
is $\piipo \in \argmin_{\pi} \widehat \lipo(\pi)$, where
\begin{align*}
  \widehat \lipo(\pi) =  \widehat \EE_{x,y^+, y^- \sim \Dcal} \brk*{\prn*{\log \prn*{\frac{\pi(y^+ \mid x) \piref(y^- \mid x)}{\pi(y^- \mid x) \piref(y^+ \mid x)}} - \frac{\beta^{-1}}{2}}^2}.
\end{align*}
The empirical objective is derived from the following population loss 
\begin{align}\label{eq:lipo}
   \lipo(\pi) = \EE_{x,y^1,y^2 \sim \rho \circ \piref} \brk*{\prn*{h_\pi\prn*{y^1,y^2} - I\prn*{y^1,y^2}/\beta}^2},
\end{align}
where 
\begin{align*}
    h_\pi(y^1,y^2) = \log \prn*{\frac{\pi(y^1) \piref(y_2)}{\pi(y^2)\piref(y_1)}},
\end{align*}
and $I(y^1,y^2)$ is a Bernoulli random variable with parameter $p = p^\ast(y_1 \succ y_2)$,
where here $p^\ast$ can be any underlying human preference (that is not necessarily parametrized by the Bradley Terry model). To show the negative result, we can make the following learning assumption: 
\begin{assumption}[In distribution guarantee for IPO]\label{assump:learning_ipo}
We assume that the returned policy $\piipo$ satisfies that
\begin{align*}
  \piipo = \argmin_{\pi \in \Pi} \lipo(\pi),
\end{align*}
i.e., the returned policy $\piipo$ induces the smallest possible in-distribution error on its population loss.
\end{assumption}

With the setup, we can state and prove the formal version of the result:
\begin{proposition}[Formal version of of \pref{prop:ipo_partial}]
Denote $\piref$ as \emph{any} reference policy such that \pref{assump:global} breaks.
Let $\Pi_{\textsf{ipo}}$ be the set of IPO returned policies such that \pref{assump:learning_ipo} holds.
Then there exists policy $\pi \in \Pi_{\textsf{ipo}}$ such that $J(\pi) = -\infty$.
\end{proposition}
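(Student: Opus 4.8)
The plan is to reuse the promptless three-action construction from the \pfref{prop:dpo_partial} and to exploit the fact that the IPO population loss places no constraint on off-support mass. I would take $\Ycal = \crl*{y_1,y_2,y_3}$ with $\Pi$ the class of all tabular policies, and choose any $\piref$ supported only on $\crl*{y_1,y_2}$, so that $\piref(y_3) = 0$. Since some policy can place positive mass on $y_3$, the ratio $\pi(y_3)/\piref(y_3)$ is unbounded and hence \pref{assump:global} breaks.

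The first step is to characterize $\Pi_{\textsf{ipo}}$. Because $y^1,y^2$ are sampled from $\piref$ in \eqref{eq:lipo}, the population loss $\lipo(\pi)$ averages only over pairs drawn from $\crl*{y_1,y_2}$, and through $h_\pi$ it depends on $\pi$ solely via the on-support log-ratio $h_\pi(y_1,y_2) = \log\frac{\pi(y_1)\piref(y_2)}{\pi(y_2)\piref(y_1)}$ (the diagonal pairs contribute an additive constant independent of $\pi$, since $h_\pi(y_i,y_i)=0$). Minimizing the resulting per-pair quadratic, using $h_\pi(y_2,y_1) = -h_\pi(y_1,y_2)$ and $I^2 = I$, shows the minimum is attained exactly when $h_\pi(y_1,y_2)$ equals a fixed target $h^\star$ depending only on $p^\ast(y_1\succ y_2)$, i.e. when the ratio $\pi(y_1)/\pi(y_2)$ takes one prescribed value. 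Crucially, $\lipo$ does not depend on $\pi(y_3)$ at all.

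The second step is to exhibit a bad minimizer. For any $m \in (0,1)$ I set $\pi(y_3) = m$ and distribute the remaining mass $1-m$ over $y_1,y_2$ so as to hit the prescribed ratio; this preserves $h_\pi(y_1,y_2) = h^\star$, so $\pi$ minimizes $\lipo$ and therefore $\pi \in \Pi_{\textsf{ipo}}$ under \pref{assump:learning_ipo}. Since $\pi(y_3) = m > 0$ while $\piref(y_3)=0$, the reverse KL $\KL(\pi||\piref)$ contains the divergent term $m\log\frac{m}{\piref(y_3)} = +\infty$; as the reward term $\EE_{y\sim\pi}[r^\ast(y)]$ is bounded by \pref{assump:reward}, we conclude $J(\pi) = -\infty$.

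The only nontrivial point is the characterization in the first step: one must check that the set of minimizers is an entire family parametrized by the free off-support mass $\pi(y_3)$, rather than a single policy. This holds because $\lipo$ is a function of the on-support log-ratio alone, so renormalizing the on-support probabilities to preserve that ratio while injecting arbitrary mass onto $y_3$ leaves the loss unchanged. Everything else is a direct translation of the DPO counterexample in \pref{prop:dpo_partial}.
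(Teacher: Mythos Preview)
Your proposal is correct and follows essentially the same approach as the paper: both use the promptless three-action construction with $\piref$ supported on $\{y_1,y_2\}$, observe that the IPO population loss constrains only the on-support log-ratio $\log\frac{\pi(y_1)\piref(y_2)}{\pi(y_2)\piref(y_1)}$ and leaves $\pi(y_3)$ free, and then pick a minimizer with $\pi(y_3)>0$ to force $\KL(\pi\|\piref)=\infty$. The paper additionally fixes $\piref(y_1)=\piref(y_2)=1/2$ and writes down the explicit target value for the log-ratio, whereas you leave $h^\star$ abstract; this is cosmetic and does not affect the argument.
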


\begin{proof}
Without loss of generality, we consider a promptless setting, and assume that
the response space is $\Ycal = \{y_1,y_2,y_3\}$. Again without loss of generality,
we assume $\piref$ only covers $y_1$ and $y_2$, and thus \pref{assump:global} breaks.
Specifically, let $\piref(y_1) = \piref(y_2) = 1/2$. Then we have 
\begin{align*}
\piipo = \argmin_{\pi \in \Pi} \EE_{y^1,y^2 \sim \piref} \brk*{\prn*{\log \prn*{\frac{\pi(y^1)}{\pi(y^2)}} - I\prn*{y^1,y^2}/\beta}^2},
\end{align*}
which gives
\begin{align*}
\log \prn*{\frac{\piipo(y_1)}{\piipo(y_2)}} = p^\ast(y_1 \succ y_2)/\beta,
\end{align*}
and thus we have the relation that 
\begin{align*}
\piipo(y_1) = \piipo(y_2) \cdot \exp \prn*{p^\ast(y_1 \succ y_2)/\beta}.
\end{align*}
Let $\piipo(y_2) = \alpha \in (0,1]$, then for any $\alpha$
such that $\piipo(y_3) = 1 - (1+\exp \prn*{p^\ast(y_1 \succ y_2)/\beta}) \alpha > 0$,
we will have that $\KL\prn*{\piipo || \piref}$ is unbounded, and thus we complete the proof. 
\end{proof}

\subsection{DPO Has Vacuous Forward KL}\label{sec:dpo_forward}
In this section, we show that in the worst case, the forward KL of DPO is
vacuously large. 
We first see how we can relate the forward KL divergence of the DPO policy with
the reward learning guarantee. Consider any DPO policy $\pidpo$ and its corresponding reward model 
$\rdpo$. By construction of the DPO algorithm,
we have, for any $x,y$ pair that is covered in the dataset,
\neurips{$\pidpo(y \mid x) = \frac{\piref(y \mid x) \exp(\rdpo(x,y)/\beta)}{Z(x)},$}
\arxiv{
\begin{align*}
  \pidpo(y \mid x) = \frac{\piref(y \mid x) \exp(\rdpo(x,y)/\beta)}{Z(x)},
\end{align*}}
where $Z(x) = \sum_{y} \piref(y \mid x) \exp( \rdpo (x,y)/\beta)$.
Then the forward 
KL divergence is
\neurips{
{\small
\begin{align*}
\EE_{x,y \sim \rho\circ\piref} \brk*{\log \prn*{\frac{\piref(y \mid x)}{\pidpo(y \mid x)}}} = \EE_{x,y \sim \rho\circ\piref } \brk*{\log \prn*{ \frac{Z(x)}{\exp(\rdpo(x,y)/\beta)}}} =  \EE_{x,y \sim \rho\circ\piref} \brk*{ - \frac{\rdpo(x,y)}{\beta} + \log(Z(x))}.
\end{align*}
}
}
\arxiv{
\begin{align*}
  \KL(\piref||\pidpo) = \EE_{x \sim \rho} \brk*{\KL(\piref(\cdot \mid x) || \pidpo(\cdot \mid x))} &= \EE_{x \sim \rho} \EE_{y \sim \piref(\cdot \mid x)} \brk*{\log \prn*{\frac{\piref(y \mid x)}{\pidpo(y \mid x)}}} \\
  &= \EE_{x \sim \rho}  \EE_{y \sim \piref(\cdot \mid x)} \brk*{\log \prn*{ \frac{Z(x)}{\exp(\rdpo(x,y)/\beta)}}} \\
  &= \EE_{x \sim \rho} \EE_{y \sim \piref(\cdot \mid x)} \brk*{ - \frac{\rdpo(x,y)}{\beta} + \log(Z(x))}.
\end{align*}
}

Although the first term can be easily related to the reward learning guarantee, the second term
($\EE_{x \sim \rho} \brk*{\log(Z(x))}$) can unfortunately be vacuous without further assumptions.
We formalize in the following result: 
\begin{proposition}\label{prop:unbounded_forward_kl}
  There exist $\pidpo$ such that \pref{assump:reward_learning} holds, but 
  $\KL(\piref || \pidpo)$ is arbitrarily large.
\end{proposition}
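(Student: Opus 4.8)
The plan is to exhibit a single promptless instance in which the DPO implicit reward matches $r^\ast$ accurately in distribution -- so that \pref{assump:reward_learning} holds -- yet places a large reward spike on a response that $\piref$ samples only with tiny probability. The crux is already visible in the forward-KL decomposition derived just above the statement, which I would rewrite as
\begin{align*}
  \KL(\piref || \pidpo) = \EE_{x \sim \rho}\brk*{\log Z(x) - \frac{1}{\beta}\,\EE_{y \sim \piref(\cdot \mid x)}\brk*{\rdpo(x,y)}},
\end{align*}
where $Z(x) = \EE_{y \sim \piref(\cdot\mid x)}\brk*{\exp(\rdpo(x,y)/\beta)}$. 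The key structural observation is that \pref{assump:reward_learning} controls only the \emph{linear} occurrence of $\rdpo$ (the second term), whereas $Z(x)$ enters through an \emph{exponential}. Since an exponential magnifies a reward spike far more aggressively than a squared error penalizes it, one can respect the error budget $\edpo$ while driving $\log Z$ to infinity.

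Concretely, I would take a promptless setting with $\Ycal = \crl*{y_1,y_2}$, set $r^\ast \equiv 0$ (satisfying \pref{assump:reward} with $R = 0$), and put $\piref(y_1) = 1-\epsilon$, $\piref(y_2) = \epsilon$. For a parameter $M > 0$, define the DPO implicit reward $\rdpo(y_1) = 0$, $\rdpo(y_2) = M$, and let $\pidpo$ be the induced policy $\pidpo(y) = \piref(y)\exp(\rdpo(y)/\beta)/Z$ with $Z = (1-\epsilon) + \epsilon\exp(M/\beta)$; this is a valid policy whose induced implicit reward is exactly $\rdpo$. First I would verify \pref{assump:reward_learning} by computing the in-distribution error $\EE_{y \sim \piref}\brk*{(\rdpo - r^\ast)^2} = \epsilon M^2$, and then choose $\epsilon = \edpo / M^2$ (legal once $M^2 > \edpo$) so that this equals $\edpo$ for every $M$.

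With this tuning the linear term remains bounded, $\EE_{y \sim \piref}\brk*{\rdpo} = \epsilon M = \edpo / M \to 0$, while $\log Z \geq \log\prn*{\epsilon \exp(M/\beta)} = M/\beta - 2\log M + \log \edpo \to \infty$ as $M \to \infty$. Substituting into the decomposition shows $\KL(\piref || \pidpo) \to \infty$ even though \pref{assump:reward_learning} holds throughout, which proves the claim. The one point needing genuine care -- and the step I would highlight rather than any real obstacle -- is the joint scaling of $(\epsilon, M)$: the spike contributes $\epsilon M^2$ to the error but $\epsilon\exp(M/\beta)$ to the partition function, so setting $\epsilon \sim \edpo/M^2$ pins the error at $\edpo$ while letting the exponential term, and hence $\log Z$, escape to infinity.
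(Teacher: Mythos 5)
Your proof is correct and follows essentially the same route as the paper's: both decompose the forward KL as $\log Z$ minus the linear reward term, place a large reward spike $M$ (the paper's $n$) on a response of tiny reference probability, and tune the scaling (your $\epsilon = \edpo/M^2$ versus the paper's $\piref(\tilde y) = 1/n^4$ with spike $n$) so the in-distribution squared error stays within budget while $\log Z \gtrsim M/\beta$ diverges. The only differences are cosmetic: you use a two-response space with $r^\ast \equiv 0$ and hold the error at exactly $\edpo$, whereas the paper uses a general space with a designated $\tilde y$ and lets the error vanish as $1/n^2$.
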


\begin{proof}
First without loss of generality let us consider that $r^\ast > 0$.
Now suppose there exists $\tilde y$ such that $\piref(\tilde y \mid x) = \frac{1}{n^4}$ for all $x$, where $n$ will be determined soon. 
Now suppose that for all $x$,
$\rdpo(x,\tilde y) - r^\ast(x,\tilde y) = n$ and
$\rdpo(x, y) = r^\ast(x, y)$ for all $y \neq \tilde y$.
Now we can check that 
\begin{align*}
    \EE_{x \sim \rho} \EE_{y \sim \piref(\cdot \mid x)} \brk*{
    \prn*{\rdpo(x,y) - r^\ast(x,y) }^2} = \frac{1}{n^2},
\end{align*}
which is diminishing if we take $n$ to be big enough. We can also check that 
\begin{align*}
    \EE_{x \sim \rho} \EE_{y \sim \piref(\cdot \mid x)} \brk*{ - \frac{\rdpo(x,y)}{\beta}} \geq -\frac{1}{n^3 \beta} - \frac{R}{n^4 \beta}
\end{align*}
and thus the first term will have little impact on the final bound. 
However, the second term can be lower bounded as follows:
\begin{align*}
  \log \prn*{\sum_{y} \piref(y \mid x) \exp\prn*{\widehat  r(x,y)/\beta}} &= \log 
  \prn*{ \sum_{y} \piref(y \mid x) \exp\prn*{\frac{r^\ast(x,y) + \widehat  r(x,y) - r^\ast(x,y) }{\beta}}} \\
  &\geq \log 
  \prn*{ \sum_{y} \piref(y \mid x) \exp\prn*{\frac{\widehat  r(x,y) - r^\ast(x,y) }{\beta}}} \\
  &= \log 
  \prn*{  \piref(\tilde y \mid x) \exp\prn*{\frac{\widehat  r(x,\tilde y) - r^\ast(x,\tilde y) }{\beta}}} \\
  &= \frac{n}{\beta} - 4\log(n).
\end{align*}
Putting everything together, we have 
\begin{align*}
    \KL(\piref || \pidpo) \geq \frac{n}{\beta} - 4\log(n) - \frac{1}{n^3 \beta} - \frac{R}{n^4 \beta}
\end{align*}
and since we can take $n$ arbitrarily big we complete the proof.
\end{proof}
\section{Missing Proofs}\label{app:missing_proofs}

\subsection{Proof of \pref{prop:dpo_partial}}

\begin{proposition}[Restatement of \pref{prop:dpo_partial}]
Denote $\piref$ as \emph{any} reference policy such that \pref{assump:global} breaks. 
Let $\Pi_{\textsf{dpo}}$ be the set of DPO returned policies such that
\pref{assump:reward_learning} holds. Then there exists policy $\pi \in \Pi_{\textsf{dpo}}$
such that $J(\pi) = -\infty$.
\end{proposition}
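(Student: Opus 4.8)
The plan is to exhibit a concrete promptless instance on which global coverage fails, and then hand-construct a single policy $\pi$ that (i) is a legitimate member of $\Pi_{\textsf{dpo}}$, i.e. it satisfies the in-distribution reward learning guarantee of \pref{assump:reward_learning}, yet (ii) necessarily places positive probability on a response that $\piref$ never generates, forcing the reverse KL --- and hence $J(\pi)$ --- to blow up. First I would take $\Ycal = \{y_1, y_2, y_3\}$ with no prompt, and choose $\piref$ supported only on $\{y_1, y_2\}$, so that $\piref(y_3) = 0$. Since any policy putting mass on $y_3$ has density ratio $\pi(y_3)/\piref(y_3) = \infty$, \pref{assump:global} fails for every finite $\cglobal$, which matches the hypothesis of the proposition.

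Next I would build the offending policy from its implicit DPO reward. Writing $Z^\ast = \sum_{i\in\{1,2\}}\piref(y_i)\exp(r^\ast(y_i)/\beta)$ for the optimal partition function, I define $\pi$ on the covered responses by
\[
\beta\log\frac{\pi(y_i)}{\piref(y_i)} = r^\ast(y_i) - \beta\log Z^\ast - \sqrt{\edpo}, \qquad i\in\{1,2\},
\]
and let $\pi(y_3)$ absorb the remaining mass. Choosing the free partition function $Z \equiv 1/Z^\ast$ in the DPO reward parameterization gives $\rdpo(y_i) = r^\ast(y_i) - \sqrt{\edpo}$ for $i\in\{1,2\}$, so the squared error equals $\edpo$ at every response in $\supp(\piref)$. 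Because the expectation in \pref{assump:reward_learning} is taken under $y\sim\piref$, which never draws $y_3$, the in-distribution error is exactly $\edpo$ no matter how $\pi$ behaves on $y_3$; hence $\pi \in \Pi_{\textsf{dpo}}$.

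Finally I would run the normalization argument. Since $\pi(y_i) = \pi^\ast(y_i)\exp(-\sqrt{\edpo}/\beta)$ for $i\in\{1,2\}$, where $\pi^\ast$ is the optimal policy (which is supported on $\{y_1,y_2\}$ precisely because $\piref(y_3)=0$), we obtain $\pi(y_1)+\pi(y_2) = \exp(-\sqrt{\edpo}/\beta) < 1$ for any $\edpo > 0$, so $\pi(y_3) = 1 - \exp(-\sqrt{\edpo}/\beta) > 0$. As $\piref(y_3)=0$ while $\pi(y_3)>0$, the term $\pi(y_3)\log(\pi(y_3)/\piref(y_3))$ in $\KL(\pi || \piref)$ diverges, so $\KL(\pi || \piref) = +\infty$; combined with the boundedness of $r^\ast$ (\pref{assump:reward}), this forces $J(\pi) = -\infty$.

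The main obstacle is the careful bookkeeping around the partition function $Z$: one must verify that the implicit DPO reward can be made $\edpo$-accurate in distribution using a legitimate, $y$-independent choice of $Z$, and --- crucially --- that this accuracy is assessed only over $\supp(\piref) = \{y_1,y_2\}$, so that the mass which $\pi$ is forced to leak onto the uncovered response $y_3$ costs nothing against \pref{assump:reward_learning}. This decoupling between ``accurate where $\piref$ has support'' and ``unconstrained where it does not'' is exactly what breaks DPO, and making it rigorous (rather than simply setting $Z=1$ as in the sketch) is the only delicate step.
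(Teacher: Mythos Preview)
Your proposal is correct and follows essentially the same construction as the paper: a promptless three-response instance with $\piref$ supported on $\{y_1,y_2\}$, a policy whose implicit DPO reward equals $r^\ast - \sqrt{\edpo}$ on the support so that \pref{assump:reward_learning} holds, and the observation that $\pi(y_1)+\pi(y_2)=\exp(-\sqrt{\edpo}/\beta)<1$ forces $\pi(y_3)>0$ and hence $\KL(\pi\|\piref)=\infty$. Your bookkeeping around the partition function (taking $Z=1/Z^\ast$ rather than $Z=Z^\ast$) is in fact slightly cleaner than the paper's formal proof, since it makes $\pi(y_i)=\pi^\ast(y_i)\exp(-\sqrt{\edpo}/\beta)<\pi^\ast(y_i)$ hold without any side condition on $\edpo$.
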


\begin{proof}
 Again as in the proof sketch, without loss of generality, we consider a promptless setting, and assume that
  the response space is $\Ycal = \{y_1,y_2,y_3\}$. Again without loss of generality,
  we assume $\piref$ only covers $y_1$ and $y_2$, and thus \pref{assump:global} breaks. 
Now consider the optimal policy 

$$\pi^\ast(y) = \frac{\piref(y \mid x) \exp(r^\ast(y)/\beta)}{Z^\ast(t)}, 
\forall y \in \Ycal,$$
where $Z^\ast = \sum_{y \in \Ycal} \piref(y \mid x) \exp(r^\ast(y)/\beta)$, 
note that by construction $\pi^\ast(y_3) = 0$.
  
  Then consider the following policy $\pi$ such that
  \begin{align*}
    \beta \log \prn*{\frac{\pi(y_1)}{\piref(y_1)\cdot Z^\ast}} = r^\ast(y_1) - \sqrt{\edpo}, \mathand \beta \log\prn*{ \frac{\pi(y_2)}{\piref(y_2)\cdot Z^\ast}} = r^\ast(y_2) - \sqrt{\edpo},
  \end{align*}
  Then we have 
  \begin{align*}
 \EE_{y \sim  \piref} \brk*{\prn*{\beta \log \prn*{\frac{\pidpo(y)}{\piref(y \mid x) \cdot Z^\ast}}-r^\ast(x,y)}^2} = \edpo,
\end{align*}
  thus $\pi$ satisfies \pref{assump:reward_learning}.
  Rearranging we can see that $\pi(y_1) < \pi^\ast(y_1)$ and 
  $\pi(y_2) < \pi^\ast(y_2)$.
  Now since $\pi^\ast = 0$, we have 
  \begin{align*}
      \pi^\ast(y_1) + \pi^\ast(y_2) = 1,
  \end{align*}
  and combine we get $\pi(y_3) > 0$, which implies $\KL\prn*{\pi || \piref}$ is unbounded, since $\piref(y_3) = 0$.
\end{proof}
\subsection{Proof of \pref{thm:rlhf_partial}}
In this section we prove \pref{thm:rlhf_partial}:
\begin{theorem}[Restatement of \pref{thm:rlhf_partial}]
  Suppose that \pref{assump:reward_bounded} holds. Then for any reference policy $\piref$ such that 
  \pref{assump:klball} holds with $\ekl = \frac{2R'}{\beta}$, for any RLHF policy $\pirlhf$
  with $\widehat r$ such that (c.r. \pref{assump:reward_learning}),
  \begin{align*}
  \EE_{x,y \sim \rho \circ \piref} \brk*{ \prn*{ r^\ast(x,y) - \widehat r (x,y)}^2 }\leq \ereward,
  \end{align*} 
  or more generally, the event in \pref{lem:reward_relative} holds for $\widehat r$,
  we have 
  \begin{align*}
    J(\pi^\ast) - J(\pirlhf) \leq O(C_{\ekl} \sqrt{\ereward}).
  \end{align*}
\end{theorem}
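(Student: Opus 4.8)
The plan is to mirror the proof of \pref{thm:dpo_global}, with the crucial modification that the change-of-measure step now rests on the local KL-ball coverage of \pref{assump:klball} rather than on global coverage. This forces an extra preliminary step: before importance-weighting, I must certify that \emph{both} $\pi^\ast$ and $\pirlhf$ lie inside the KL ball of radius $\ekl = 2R'/\beta$ centered at $\piref$.

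First I would invoke \pref{lem:objective_decomposition} with $\hat r = \widehat r$ and $\hat\pi = \pirlhf$, which is valid because $\pirlhf$ is exactly the reverse-KL-regularized maximizer of $\widehat r$. This yields
\begin{align*}
  J(\pi^\ast) - J(\pirlhf) \leq \EE_{x\sim\rho}\brk*{\EE_{y^1\sim\pi^\ast(\cdot\mid x),\, y^2\sim\pirlhf(\cdot\mid x)}\brk*{r^\ast(x,y^1) - \widehat r(x,y^1) - r^\ast(x,y^2) + \widehat r(x,y^2)}}.
\end{align*}
Jensen / Cauchy--Schwarz then pushes the square inside, bounding the right-hand side by the square root of the second moment of the same integrand.

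The key step, and the one I expect to be the main obstacle, is the change of measure from the pair $(\pi^\ast,\pirlhf)$ to $(\piref,\piref)$. Under global coverage this was immediate, but here I must show both policies sit in the KL ball. For $\pirlhf$ this is precisely \pref{lem:rlhf_reverse_kl}, giving $\KL(\pirlhf\|\piref)\le 2R'/\beta=\ekl$. For $\pi^\ast$ I would repeat the partition-function computation underlying that lemma: since $\pi^\ast(y\mid x)\propto \piref(y\mid x)\exp(r^\ast(x,y)/\beta)$ and $\|r^\ast\|_\infty\le R\le R'$ by \pref{assump:reward}, the same bound gives $\KL(\pi^\ast\|\piref)\le 2R/\beta\le \ekl$. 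With both policies certified inside the ball, \pref{assump:klball} supplies the pointwise ratios $\pi^\ast(y\mid x)/\piref(y\mid x)\le C_{\ekl}$ and $\pirlhf(y\mid x)/\piref(y\mid x)\le C_{\ekl}$, so reweighting each marginal contributes one factor of $C_{\ekl}$, i.e. a factor $C_{\ekl}^2$ under the square root, and converts the expectation into one over $y^1,y^2\sim\piref$.

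Finally, I would control the resulting $\piref\times\piref$ second moment by the reward-learning guarantee, either directly through the relative form of \pref{lem:reward_relative} or by the elementary inequality $(a-b)^2\le 2a^2+2b^2$ to split the integrand into two $\EE_{y\sim\piref}\brk*{(r^\ast(x,y)-\widehat r(x,y))^2}\le \ereward$ terms. Collecting constants gives $J(\pi^\ast)-J(\pirlhf)\le \sqrt{C_{\ekl}^2\cdot O(\ereward)}=O(C_{\ekl}\sqrt{\ereward})$, as claimed. Apart from the coverage-membership check, the only delicacy worth flagging is keeping the implicit assumption $R\le R'$ explicit, so that the true optimal policy is itself covered by the ball defined through $R'$; everything else is routine.
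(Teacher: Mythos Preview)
Your proposal is correct and follows essentially the same route as the paper: invoke \pref{lem:objective_decomposition}, apply Cauchy--Schwarz, change measure to $\piref\times\piref$ via the KL-ball coverage, and finish with the reward-learning guarantee. In fact you are more careful than the paper's own write-up, which cites only \pref{lem:rlhf_reverse_kl} and \pref{assump:klball} at the change-of-measure step and does not spell out why $\pi^\ast$ also lies in the ball; your explicit check (and the flag that $R\le R'$ is implicitly needed) fills that gap.
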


To prove this we first prove the following lemma so we can 
leverage \pref{assump:klball}:

\begin{lemma}[Restatement of \pref{lem:rlhf_reverse_kl}]
    Suppose that \pref{assump:reward_bounded} holds. Then for any RLHF policy $\pirlhf$, we have that
    \begin{align*}
      \KL(\pirlhf || \piref) := 
      \EE_{x \sim \rho} \EE_{y \sim \pirlhf(\cdot \mid x)} \brk*{\log \prn*{\frac{\pirlhf(y \mid x)}{\piref(y \mid x)}}}
      \leq \frac{2R'}{\beta}.
    \end{align*}
\end{lemma}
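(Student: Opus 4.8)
The plan is to exploit the fact that the RLHF policy, being the maximizer of the KL-regularized objective $\widehat{\EE}_{x}[\EE_{y \sim \pi(\cdot \mid x)}[\widehat r(x,y)] - \beta \KL(\pi(\cdot \mid x) \,\|\, \piref(\cdot \mid x))]$, has a known closed form: it is the Gibbs (minimum relative entropy) distribution
\begin{align*}
  \pirlhf(y \mid x) = \frac{\piref(y \mid x)\exp(\widehat r(x,y)/\beta)}{Z(x)}, \qquad Z(x) = \sum_{y} \piref(y \mid x)\exp(\widehat r(x,y)/\beta).
\end{align*}
This is the same reparameterization already used elsewhere in the paper (e.g. in \pref{sec:dpo_forward}), so I would simply invoke it. From here the log-density ratio is $\log(\pirlhf(y \mid x)/\piref(y \mid x)) = \widehat r(x,y)/\beta - \log Z(x)$, which turns the reverse KL into a quantity I can control termwise.

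First I would substitute this ratio into the definition of $\KL(\pirlhf \,\|\, \piref)$ and split it into two contributions: $\tfrac{1}{\beta}\EE_{x}\EE_{y \sim \pirlhf}[\widehat r(x,y)]$ and $-\EE_x[\log Z(x)]$. The first term is bounded by $R'/\beta$ directly from \pref{assump:reward_bounded}, since $\widehat r(x,y) \le R'$ pointwise and $\pirlhf(\cdot \mid x)$ is a probability distribution. The second term requires a lower bound on $Z(x)$: writing $Z(x) = \EE_{y \sim \piref}[\exp(\widehat r(x,y)/\beta)]$ and using $\widehat r(x,y) \ge -R'$ gives $Z(x) \ge \exp(-R'/\beta)$, hence $-\log Z(x) \le R'/\beta$. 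Adding the two bounds yields exactly $\KL(\pirlhf \,\|\, \piref) \le 2R'/\beta$.

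There is essentially no serious obstacle here; the argument is a short one-liner once the Gibbs form is in hand. The only place demanding a moment of care is the lower bound on the partition function $Z(x)$, since a naive bound $-\log Z(x) \le 0$ would require $Z(x) \ge 1$, which need not hold — I instead use the two-sided boundedness $|\widehat r| \le R'$ to get the $R'/\beta$ slack on that term as well. I would also note that no realizability or learning-accuracy assumption is needed for this lemma: boundedness of the reward class alone suffices, which is precisely what makes \pref{lem:rlhf_reverse_kl} the clean bridge into \pref{assump:klball} (with $\ekl = 2R'/\beta$) that \pref{thm:rlhf_partial} subsequently uses.
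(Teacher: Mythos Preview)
Your proof is correct and follows essentially the same route as the paper: both invoke the Gibbs closed form for $\pirlhf$, write $\log(\pirlhf/\piref) = \widehat r/\beta - \log Z(x)$, and bound the two pieces separately using $|\widehat r| \le R'$. The only cosmetic difference is in the treatment of the partition function: the paper lower-bounds $\log Z(x)$ via Jensen's inequality, $\log Z(x) \ge \EE_{y \sim \piref}[\widehat r(x,y)/\beta]$, and then bounds the resulting difference of two expected rewards by $2R'/\beta$, whereas you bypass Jensen and use the pointwise bound $\widehat r \ge -R'$ directly inside the exponent to get $Z(x) \ge \exp(-R'/\beta)$; both routes land on the same constant.
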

\begin{proof}
since we have that $\pirlhf(y \mid x) = \frac{\piref(y \mid x) \exp(\hat r(x,y)/\beta)}{Z(x)}$
\emph{for all} $x \in \supp(\rho), y \in \Ycal$, we have
\begin{align*}
    \KL(\pirlhf || \piref) = \EE_{x \sim \rho} \EE_{y \sim \pirlhf(\cdot \mid x)} \brk*{\log \prn*{\frac{\exp\prn*{\hat r(x,y)}}{\beta Z(x)}}} = \EE_{x \sim \rho} \EE_{y \sim \pirlhf(\cdot \mid x)} \brk*{\frac{\hat r(x,y)}{\beta} - \log(Z(x))}.
\end{align*}
Plugging in the definition of $Z(x)$ we get 
\begin{align*}
    \log(Z(x)) = \log \prn*{\EE_{y \sim \piref(\cdot \mid x)} \brk*{\exp\prn*{\frac{\hat r(x,y)}{\beta}}}} \geq \EE_{y \sim \piref(\cdot \mid x)} \brk*{\frac{\hat r(x,y)}{\beta}}
\end{align*}
due to Jensen's inequality. Thus we have
\begin{align*}
    \KL(\pirlhf || \piref) \leq \EE_{x \sim \rho} \EE_{y \sim \pirlhf(\cdot \mid x)} \brk*{\frac{\hat r(x,y)}{\beta}} - \EE_{x \sim \rho} \EE_{y \sim \pirlhf(\cdot \mid x)} \brk*{\frac{\hat r(x,y)}{\beta}}  \leq  \frac{2R'}{\beta}.
\end{align*}
\end{proof}

Now with \pref{lem:rlhf_reverse_kl}, we can prove \pref{thm:rlhf_partial}:

\begin{proof}
    By \pref{lem:objective_decomposition}, we have 
\begin{align*}
    &~~~~J(\pi^\ast) - J(\pirlhf) \\&\leq 
    \EE_{x \sim \rho} \EE_{y^1 \sim \pi^\ast(\cdot \mid x), y^2 \sim \pirlhf(\cdot \mid x)} 
        \brk*{r^\ast(x,y^1) - \widehat r (x,y^1) - r^\ast(x,y^2) + \widehat r (x,y^2)} \\
    &\leq \sqrt{\EE_{x \sim \rho} \EE_{y^1 \sim \pi^\ast(\cdot \mid x), y^2 \sim \pirlhf(\cdot \mid x)} 
    \brk*{ \prn*{r^\ast(x,y^1) - \widehat r(x,y^1) - r^\ast(x,y^2) + \widehat r(x,y^2)}^2}} \\
    &\leq \sqrt{\cglobal^2 \EE_{x \sim \rho} \EE_{y^1, y^2 \sim \piref(\cdot \mid x)} 
    \brk*{ \prn*{r^\ast(x,y^1) - \widehat r(x,y^1) - r^\ast(x,y^2) + \widehat r(x,y^2)}^2}} \tag{\pref{lem:rlhf_reverse_kl} and \pref{assump:klball}}
    \\
    &\leq C\sqrt{\ereward}. \tag{\pref{lem:reward_relative}}
\end{align*}
\end{proof}
\neurips{
\section{Synthetic experiment for extrapolation} \label{sec:extrapolation_exp}
\subsection{Extrapolation with function approximation}
We first describe our experiment setup. We consider linear function 
approximation setting where we have 100 responses ($|\Ycal| = 100$).
We consider a 16-dimensional feature vector $\phi: \Ycal \to \RR^{16}$,
and we generate $\phi(y)$ by simply sampling 99 random 16-dimensional 
vectors where the $\ell_1$ norm of each vector is 1. We add one final 
$\phi(y) = [1,0,0,\dots]$.

We construct the implicit human reward $r^\ast (y) = {w^\ast}^{\top} \phi(y)$, where $w^\ast = [5,...]$, and the rest of the entries are 
sampled from Unif(-2,2). 

We parametrize the policies as softmax linear policies, i.e., 
we parametrize each policy $\pi$ with $w^\pi \in \RR^{16}$
such that $\pi(y) = \frac{{w^\pi}^\top \phi(y)}{\sum_{y \in \Ycal}
{w^\pi}^\top \phi(y)}$. One can check in this formulation the 
implicit reward in DPO ($\rdpo$) is linear in $\phi$.

We generate 10000 preference pairs, according to the BT model 
under $r^\ast$, for the first 50 responses. We checked that the 
first responses indeed span $\RR^{16}$. Thus the offline data has 
global coverage in linear function approximation setting. 

For on-policy RL methods, we first train a reward model. Then we 
simply perform gradient descent on the KL-regularized bandit loss 
(we assume $\piref$ is uniform). For DPO, we simply perform SGD 
on the offline preference dataset. We track two qualities over 
the training: the mean log probability of a random subset of preferred 
responses, and the log probability of best response $\phi(y) = [1,0,0,\dots]$. We plot the results in \pref{fig:extrapolation}.
We observe that both methods have the extrapolation behavior --
the probability of preferred responses decays but the probability of 
the optimal response goes up.

\begin{figure}
    \centering
    \includegraphics[width=0.4\linewidth]{figs/rlhf.pdf}
    \includegraphics[width=0.4\linewidth]{figs/dpo.pdf}
    \caption{Extrapolation behavior of Online RL method and DPO under linear  function 
    approximation. We plot the mean log probability of the preferred
    responses and the log probability of the best response, which is 
    unseen in the training data. We see that both algorithms correctly 
    assigns increasing probability to the best response.}
    \label{fig:extrapolation}
\end{figure}

\subsection{Extrapolation without function approximation}
Now we describe the setting where function approximation fails, and this reduces 
to a Multi-arm bandit setting. We set $|\Ycal| = 500$, and the offline data only 
covers the first half of the responses. The $r^\ast(y)$ is set by sampling 
from Unif(-10,10), and we generate 10000 offline samples by uniformly sample 
pairs of responses from the first half of the response space, and then label 
them with BT model under $r^\ast$. We train DPO with 5000 iterations, and plot 
the mean probability of the responses \emph{outside} of the data support in \pref{fig:extrapolation_bad}: we observe that the mean probability of the out-of-distribution responses are increasing, however, this could be an undesirable behavior because the reward of the out-of-distribution responses could be 
arbitrarily bad. 

\begin{figure}
    \centering
    \includegraphics[width=0.4\linewidth]{figs/dpo_non_linear.pdf}
    \caption{Extrapolation behavior of DPO without function approximation. We plot the average probability of out-of-distribution responses along the training and DPO assigns increasing probability to out-of-distribution responses.}
    \label{fig:extrapolation_bad}
\end{figure}
}
\section{Details of \pref{sec:hypo}} \label{sec:exp_details}

\subsection{Theoretical guarantee}\label{sec:hypo_theory}
In this section, we consider the constrained optimization version of HyPO 
(\pref{eq:constrained_dpo}). Note that the reward function class is identical 
to DPO, i.e.,
$\Rcal_{\mathsf{hypo}} = \crl*{\beta \log \prn*{\frac{\pi(y \mid x)}{\piref(y \mid x) Z(x)}} \mid \pi \in \Pi}$, where $Z(x)$ is the partition function. 
Then for each output policy $\pihypo$, we can denote its implicit reward function $\rhypo(x,y) := \beta\frac{\pihypo(y \mid x)}{\piref(y \mid x) \cdot Z(x)}$, and similarly to \pref{thm:rlhf_partial}, we can obtain the following 
guarantee in the partial coverage condition:

\begin{theorem}
  For any reference policy $\piref$ such that 
  \pref{assump:klball} holds with $\ekl = \frac{2R'}{\beta}$, for any HyPO policy $\pihypo$ such that the event in \pref{lem:reward_relative} holds,
  i.e., 
  \begin{align*}
      \EE_{x,y^1,y^2 \sim \mu \circ \piref} \brk*{ \prn*{ r^\ast(x,y^1) - r^\ast(x,y^2) - \rhypo (x,y^1) + \rhypo(x,y^2)}^2 }\leq \ehypo,
    \end{align*} 
  we have
  \begin{align*}
    J(\pi^\ast) - J(\pihypo) \leq O(C_{\ekl} \sqrt{\ehypo}).
  \end{align*}
\end{theorem}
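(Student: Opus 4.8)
The plan is to follow the proof of \pref{thm:rlhf_partial} almost verbatim, with the single structural change that the explicit KL constraint in \pref{eq:constrained_dpo} takes over the role that \pref{lem:rlhf_reverse_kl} played in the RLHF analysis. Two facts drive the argument. First, by the DPO reparameterization, the implicit reward $\rhypo(x,y) = \beta \log \prn*{\frac{\pihypo(y\mid x)}{\piref(y \mid x) Z(x)}}$ is exactly the reward for which $\pihypo$ is the \emph{unconstrained} KL-regularized optimal policy: substituting $\exp(\rhypo(x,y)/\beta) = \frac{\pihypo(y\mid x)}{\piref(y\mid x) Z(x)}$ into the closed form $\frac{\piref(y\mid x)\exp(\rhypo(x,y)/\beta)}{\sum_{y'}\piref(y'\mid x)\exp(\rhypo(x,y')/\beta)}$ and using $\sum_{y'}\pihypo(y'\mid x) = 1$ recovers $\pihypo(y\mid x)$, independently of the choice of $Z(x)$. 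Hence \pref{lem:objective_decomposition} applies with $\hat r = \rhypo$ and $\hat\pi = \pihypo$. Second, because $\pihypo$ is feasible for \pref{eq:constrained_dpo}, it satisfies $\KL(\pihypo || \piref) \le \frac{2R'}{\beta} = \ekl$ by construction, so \pref{assump:klball} yields $\max_{x,y: \rho(x)>0} \frac{\pihypo(y\mid x)}{\piref(y\mid x)} \le C_{\ekl}$.

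I would then run the same chain of inequalities as in \pref{thm:rlhf_partial}. Applying \pref{lem:objective_decomposition} gives
\begin{align*}
J(\pi^\ast) - J(\pihypo) \le \EE_{x\sim\rho}\EE_{y^1\sim\pi^\ast(\cdot\mid x),\, y^2\sim\pihypo(\cdot\mid x)}\brk*{r^\ast(x,y^1) - \rhypo(x,y^1) - r^\ast(x,y^2) + \rhypo(x,y^2)},
\end{align*}
and Jensen's inequality (applied to $t \mapsto t^2$) upper-bounds the right-hand side by the square root of the corresponding squared quantity. Writing $g(x,y) := r^\ast(x,y) - \rhypo(x,y)$, the integrand becomes $(g(x,y^1)-g(x,y^2))^2$, so a product-measure change of variables from $(\pi^\ast,\pihypo)$ to $(\piref,\piref)$ introduces the factor $\max_{y^1}\frac{\pi^\ast(y^1\mid x)}{\piref(y^1\mid x)} \cdot \max_{y^2}\frac{\pihypo(y^2\mid x)}{\piref(y^2\mid x)} \le C_{\ekl}^2$. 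Plugging in the assumed relative reward-learning guarantee $\EE_{x,y^1,y^2\sim\rho\circ\piref}\brk*{(g(x,y^1)-g(x,y^2))^2} \le \ehypo$ then yields $J(\pi^\ast)-J(\pihypo) \le C_{\ekl}\sqrt{\ehypo}$, as claimed.

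The one point needing care beyond the RLHF proof is the coverage of $\pi^\ast$: the product change of measure bounds \emph{both} density ratios, so I also need $\pi^\ast$ to lie in the KL-ball of radius $\ekl$. This follows from the same Jensen estimate as in \pref{lem:rlhf_reverse_kl} applied to $\pi^\ast(y\mid x) \propto \piref(y\mid x)\exp(r^\ast(x,y)/\beta)$, giving $\KL(\pi^\ast || \piref) \le \frac{2R}{\beta} \le \frac{2R'}{\beta} = \ekl$ under the natural normalization $R \le R'$ (the true reward is no larger in magnitude than the clipped/bounded learned reward). I expect the main conceptual obstacle to be purely in setting up the first fact correctly --- recognizing that although HyPO solves a \emph{constrained} problem, its output is nonetheless the \emph{unconstrained} KL-regularized optimizer of its own implicit reward, which is exactly what \pref{lem:objective_decomposition} requires; the constraint is then used only to certify the reverse-KL bound and hence the coverage factor $C_{\ekl}$. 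Everything else is routine and identical to \pref{thm:rlhf_partial}.
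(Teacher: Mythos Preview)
Your proposal is correct and follows essentially the same route as the paper's proof: both identify that (i) the explicit KL constraint in \pref{eq:constrained_dpo} supplies the reverse-KL bound that \pref{lem:rlhf_reverse_kl} provided for RLHF, and (ii) by the DPO reparameterization $\pihypo$ is the \emph{unconstrained} KL-regularized optimizer of its own implicit reward $\rhypo$, so \pref{lem:objective_decomposition} applies and the chain of inequalities from \pref{thm:rlhf_partial} goes through verbatim. Your treatment is in fact slightly more careful than the paper's in that you explicitly verify the coverage of $\pi^\ast$ (via $\KL(\pi^\ast\|\piref)\le 2R/\beta \le 2R'/\beta$), which the paper leaves implicit in its change-of-measure step.
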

\begin{proof}
The proof mostly follows the proof of \pref{thm:rlhf_partial}.
It remains to show the following two properties:

1) Note that \pref{thm:rlhf_partial} requires  \pref{assump:reward_bounded}, which does not hold for 
$\rhypo$ (note that $\rhypo$ is only bounded under $\rho$, 
but not for all $x$), but we only use it to prove the sufficient 
condition in \pref{lem:rlhf_reverse_kl}, which is satisfied 
by the constraint of HyPO. 

2) We need to check that the premise of 
\pref{lem:objective_decomposition} holds, i.e., 
\begin{align*}
        \pihypo \in \argmax_{\pi} \EE_{x \sim \rho} \brk*{ \EE_{y \sim \pi(\cdot \mid x)} [\rhypo(x,y)] - \beta \KL\prn*{\pi(\cdot \mid x) || \piref(\cdot \mid x)}},
\end{align*}
note that with the reparametrization between $\pihypo$ and $\rhypo$,
$\pihypo$ is always among the minimizer of the \emph{unconstrained}
policy set, so we can still invoke \pref{lem:reward_relative}.
The rest of the proof now follows the proof of \pref{thm:rlhf_partial}
so we omit the details.\loose 
\end{proof}

Finally, we remark the connection to the negative result of DPO, i.e, 
\pref{prop:dpo_partial}: note that given $\KL(\pihypo || \piref) \leq \infty$,
we have that for all $x$ such that $\rho(x) > 0$, we have for all $y$,
$\beta \log \prn*{ \frac{\pihypo(y \mid x)}{\piref(y \mid x)}} < \infty$, (again with 
the convention that $\frac{0}{0} = 0$), which breaks the construction 
of \pref{prop:dpo_partial}. 

\subsection{Experiment details}

\subsubsection{Summarization}
In this section, we provide more details of our summarization experiment. We use the Pythia 1.4B and 2.8B model \citep{biderman2023pythia} with hugging face model cards: EleutherAI/pythia-1.4b-deduped and EleutherAI/pythia-2.8b-deduped. The TL;DR dataset is available at \url{ https://github.com/openai/summarize-from-feedback}. The human reference dataset contains 117k training, 6.45K validation and 6.55K testing data. The preference dataset contains 92.9K training and 83.8K validation data. The reward evaluation 
and KL computation is performed on the whole validation data of the reference dataset. The GPT winrate is computed on a subset of 600 samples from the validation data. The GPT API checkpoint we use is gpt-4-0613. We follow the standard prompt for the winrate evaluation (e.g., see Appendix D.3 of \citet{gao2024rebel}). Below we provide the hyperparameter for HyPO and DPO. Note that to optmize the online KL, we use Reinforce with Leave One Out (RLOO) \citep{kool2019buy} with two generations per prompt  ($k=2$) and optimize trajectory-level KL. 

For our experiment, we run on a cluster of mixture of Nvidia A6000
and L40 GPUs with 48 GB VRAM. We use 4 GPUs in parallel for training,
and for DPO the experiment time varies from 1 hour to 2 hours to finish, and for HyPO the time varies between 4 hours to 5 hours. 

\begin{table}[h]
\centering
\begin{minipage}{0.45\textwidth}
\centering
    \caption{RM/SFT hyperparameters.}
  \begin{tabular}{cc}
    \toprule
    Learning rate & 3e-6 \\
    Batch size & 64  \\
    Learning rate scheduler & cosine  \\
    Optimizer & Adamw \\
    LoRA & False \\
    \bottomrule
  \end{tabular}\label{table:rm}
\end{minipage}\hfill
\begin{minipage}{0.45\textwidth}
    \centering
    \caption{DPO hyperparameters.}
  \begin{tabular}{cc}
    \toprule
    Learning rate & 3e-6 \\
    Batch size & 64  \\
    Learning rate scheduler & cosine  \\
    Optimizer & Adamw \\
    $\beta$ & 0.05 \\
    \bottomrule
  \end{tabular}\label{table:dpo}
\end{minipage}
\end{table}

\begin{table}[h]
\centering
\begin{minipage}{0.45\textwidth}
\centering
    \caption{HyPO hyperparameters.}
  \begin{tabular}{cc}
    \toprule
    Learning rate & 3e-6 \\
    Batch size & 64  \\
    Learning rate scheduler & cosine  \\
    Optimizer & Adamw \\
    $\beta$ & 0.05 \\
    $\lambda$ & 0.0001 \\
    RLOO $k$ & 2 \\
    \bottomrule
  \end{tabular}\label{table:hypo}
\end{minipage}\hfill
\begin{minipage}{0.45\textwidth}
    \centering
    \caption{Lora configurations.}
  \begin{tabular}{cc}
    \toprule
    $r$ & 1024 \\
    $\alpha$ & 2048  \\
    Dropout & 0  \\
    \bottomrule
  \end{tabular}\label{table:lora}
\end{minipage}
\end{table}

\subsubsection{General Chat}
For the base model of general chat experiments, we use Llama3-8B-Instruct \citep{meta2024introducing} with hugging face model card: meta-llama/Meta-Llama-3-8B-Instruct. The dataset card of the Ultrafeedback dataset \citep{cui2023ultrafeedback}
is HuggingFaceH4/ultrafeedback\_binarized. In addition to the KL penalty, in the general chat task we add an additional length penalty, and the online penalty of a generation $y$ with context $x$ becomes $\log\prn*{\frac{\pi(y \mid x)}{\piref(y \mid x)}} + \alpha |y|$. We summarize the hyperparameter of each baseline below. 

We run the general chat experiment on a node of 8 Nvidia A100 80GB GPUs. DPO takes 3 hours to train one epoch while HyPO takes 18 hours to train one epoch. 
\begin{table}[h]
\centering
\begin{minipage}{0.45\textwidth}
\centering
    \caption{HyPO hyperparameters.}
  \begin{tabular}{cc}
    \toprule
    Learning rate & 3e-6 \\
    Batch size & 8  \\
    Learning rate scheduler & linear  \\
    Optimizer & Adamw \\
    $\beta$ & 0.05 \\
    $\lambda$ & 0.0002 \\
    RLOO $k$ & 2 \\
    $\alpha$ & 0.02 \\
    \bottomrule
  \end{tabular}\label{table:hypo_uf}
\end{minipage}\hfill
\begin{minipage}{0.45\textwidth}
    \centering
    \caption{DPO hyperparameters.}
  \begin{tabular}{cc}
    \toprule
    Learning rate & 3e-6 \\
    Batch size & 8  \\
    Learning rate scheduler & linear  \\
    Optimizer & Adamw \\
    $\beta$ & 0.05 \\
    \bottomrule
  \end{tabular}\label{table:dpo_uf}
\end{minipage}
\end{table}

\neurips{
\clearpage
\newpage

\section*{NeurIPS Paper Checklist}

\begin{enumerate}

\item {\bf Claims}
    \item[] Question: Do the main claims made in the abstract and introduction accurately reflect the paper's contributions and scope?
    \item[] Answer: \answerYes{} % Replace by \answerYes{}, \answerNo{}, or \answerNA{}.
    \item[] Justification: We have theoretical and empirical results supporting the claims made in the abstract and introduction.
    \item[] Guidelines:
    \begin{itemize}
        \item The answer NA means that the abstract and introduction do not include the claims made in the paper.
        \item The abstract and/or introduction should clearly state the claims made, including the contributions made in the paper and important assumptions and limitations. A No or NA answer to this question will not be perceived well by the reviewers. 
        \item The claims made should match theoretical and experimental results, and reflect how much the results can be expected to generalize to other settings. 
        \item It is fine to include aspirational goals as motivation as long as it is clear that these goals are not attained by the paper. 
    \end{itemize}

\item {\bf Limitations}
    \item[] Question: Does the paper discuss the limitations of the work performed by the authors?
    \item[] Answer: \answerYes{} % Replace by \answerYes{}, \answerNo{}, or \answerNA{}.
    \item[] Justification: We provide a discussion about
    the limitations of our work in the discussion section.
    \item[] Guidelines:
    \begin{itemize}
        \item The answer NA means that the paper has no limitation while the answer No means that the paper has limitations, but those are not discussed in the paper. 
        \item The authors are encouraged to create a separate "Limitations" section in their paper.
        \item The paper should point out any strong assumptions and how robust the results are to violations of these assumptions (e.g., independence assumptions, noiseless settings, model well-specification, asymptotic approximations only holding locally). The authors should reflect on how these assumptions might be violated in practice and what the implications would be.
        \item The authors should reflect on the scope of the claims made, e.g., if the approach was only tested on a few datasets or with a few runs. In general, empirical results often depend on implicit assumptions, which should be articulated.
        \item The authors should reflect on the factors that influence the performance of the approach. For example, a facial recognition algorithm may perform poorly when image resolution is low or images are taken in low lighting. Or a speech-to-text system might not be used reliably to provide closed captions for online lectures because it fails to handle technical jargon.
        \item The authors should discuss the computational efficiency of the proposed algorithms and how they scale with dataset size.
        \item If applicable, the authors should discuss possible limitations of their approach to address problems of privacy and fairness.
        \item While the authors might fear that complete honesty about limitations might be used by reviewers as grounds for rejection, a worse outcome might be that reviewers discover limitations that aren't acknowledged in the paper. The authors should use their best judgment and recognize that individual actions in favor of transparency play an important role in developing norms that preserve the integrity of the community. Reviewers will be specifically instructed to not penalize honesty concerning limitations.
    \end{itemize}

\item {\bf Theory Assumptions and Proofs}
    \item[] Question: For each theoretical result, does the paper provide the full set of assumptions and a complete (and correct) proof?
    \item[] Answer: \answerYes{} % Replace by \answerYes{}, \answerNo{}, or \answerNA{}.
    \item[] Justification: We clearly state all our assumptions 
    in all theorem statements, and the proofs can be found in the
    appendix. 
    \item[] Guidelines:
    \begin{itemize}
        \item The answer NA means that the paper does not include theoretical results. 
        \item All the theorems, formulas, and proofs in the paper should be numbered and cross-referenced.
        \item All assumptions should be clearly stated or referenced in the statement of any theorems.
        \item The proofs can either appear in the main paper or the supplemental material, but if they appear in the supplemental material, the authors are encouraged to provide a short proof sketch to provide intuition. 
        \item Inversely, any informal proof provided in the core of the paper should be complemented by formal proofs provided in appendix or supplemental material.
        \item Theorems and Lemmas that the proof relies upon should be properly referenced. 
    \end{itemize}

    \item {\bf Experimental Result Reproducibility}
    \item[] Question: Does the paper fully disclose all the information needed to reproduce the main experimental results of the paper to the extent that it affects the main claims and/or conclusions of the paper (regardless of whether the code and data are provided or not)?
    \item[] Answer: \answerYes{} % Replace by \answerYes{}, \answerNo{}, or \answerNA{}.
    \item[] Justification: We provide pseudocode, hyparameter table
    and code in this submission.
    \item[] Guidelines:
    \begin{itemize}
        \item The answer NA means that the paper does not include experiments.
        \item If the paper includes experiments, a No answer to this question will not be perceived well by the reviewers: Making the paper reproducible is important, regardless of whether the code and data are provided or not.
        \item If the contribution is a dataset and/or model, the authors should describe the steps taken to make their results reproducible or verifiable. 
        \item Depending on the contribution, reproducibility can be accomplished in various ways. For example, if the contribution is a novel architecture, describing the architecture fully might suffice, or if the contribution is a specific model and empirical evaluation, it may be necessary to either make it possible for others to replicate the model with the same dataset, or provide access to the model. In general. releasing code and data is often one good way to accomplish this, but reproducibility can also be provided via detailed instructions for how to replicate the results, access to a hosted model (e.g., in the case of a large language model), releasing of a model checkpoint, or other means that are appropriate to the research performed.
        \item While NeurIPS does not require releasing code, the conference does require all submissions to provide some reasonable avenue for reproducibility, which may depend on the nature of the contribution. For example
        \begin{enumerate}
            \item If the contribution is primarily a new algorithm, the paper should make it clear how to reproduce that algorithm.
            \item If the contribution is primarily a new model architecture, the paper should describe the architecture clearly and fully.
            \item If the contribution is a new model (e.g., a large language model), then there should either be a way to access this model for reproducing the results or a way to reproduce the model (e.g., with an open-source dataset or instructions for how to construct the dataset).
            \item We recognize that reproducibility may be tricky in some cases, in which case authors are welcome to describe the particular way they provide for reproducibility. In the case of closed-source models, it may be that access to the model is limited in some way (e.g., to registered users), but it should be possible for other researchers to have some path to reproducing or verifying the results.
        \end{enumerate}
    \end{itemize}

\item {\bf Open access to data and code}
    \item[] Question: Does the paper provide open access to the data and code, with sufficient instructions to faithfully reproduce the main experimental results, as described in supplemental material?
    \item[] Answer: \answerYes{} % Replace by \answerYes{}, \answerNo{}, or \answerNA{}.
    \item[] Justification: We use public data on hugging face, 
    and we submit the code for this submission.
    \item[] Guidelines:
    \begin{itemize}
        \item The answer NA means that paper does not include experiments requiring code.
        \item Please see the NeurIPS code and data submission guidelines (\url{https://nips.cc/public/guides/CodeSubmissionPolicy}) for more details.
        \item While we encourage the release of code and data, we understand that this might not be possible, so “No” is an acceptable answer. Papers cannot be rejected simply for not including code, unless this is central to the contribution (e.g., for a new open-source benchmark).
        \item The instructions should contain the exact command and environment needed to run to reproduce the results. See the NeurIPS code and data submission guidelines (\url{https://nips.cc/public/guides/CodeSubmissionPolicy}) for more details.
        \item The authors should provide instructions on data access and preparation, including how to access the raw data, preprocessed data, intermediate data, and generated data, etc.
        \item The authors should provide scripts to reproduce all experimental results for the new proposed method and baselines. If only a subset of experiments are reproducible, they should state which ones are omitted from the script and why.
        \item At submission time, to preserve anonymity, the authors should release anonymized versions (if applicable).
        \item Providing as much information as possible in supplemental material (appended to the paper) is recommended, but including URLs to data and code is permitted.
    \end{itemize}

\item {\bf Experimental Setting/Details}
    \item[] Question: Does the paper specify all the training and test details (e.g., data splits, hyperparameters, how they were chosen, type of optimizer, etc.) necessary to understand the results?
    \item[] Answer: \answerYes{} % Replace by \answerYes{}, \answerNo{}, or \answerNA{}.
    \item[] Justification: We provide all the training details 
    and hyperparameters. 
    \item[] Guidelines:
    \begin{itemize}
        \item The answer NA means that the paper does not include experiments.
        \item The experimental setting should be presented in the core of the paper to a level of detail that is necessary to appreciate the results and make sense of them.
        \item The full details can be provided either with the code, in appendix, or as supplemental material.
    \end{itemize}

\item {\bf Experiment Statistical Significance}
    \item[] Question: Does the paper report error bars suitably and correctly defined or other appropriate information about the statistical significance of the experiments?
    \item[] Answer: \answerYes{} % Replace by \answerYes{}, \answerNo{}, or \answerNA{}.
    \item[] Justification: We repeat all our experiments over 3 seeds.
    \item[] Guidelines:
    \begin{itemize}
        \item The answer NA means that the paper does not include experiments.
        \item The authors should answer "Yes" if the results are accompanied by error bars, confidence intervals, or statistical significance tests, at least for the experiments that support the main claims of the paper.
        \item The factors of variability that the error bars are capturing should be clearly stated (for example, train/test split, initialization, random drawing of some parameter, or overall run with given experimental conditions).
        \item The method for calculating the error bars should be explained (closed form formula, call to a library function, bootstrap, etc.)
        \item The assumptions made should be given (e.g., Normally distributed errors).
        \item It should be clear whether the error bar is the standard deviation or the standard error of the mean.
        \item It is OK to report 1-sigma error bars, but one should state it. The authors should preferably report a 2-sigma error bar than state that they have a 96\% CI, if the hypothesis of Normality of errors is not verified.
        \item For asymmetric distributions, the authors should be careful not to show in tables or figures symmetric error bars that would yield results that are out of range (e.g. negative error rates).
        \item If error bars are reported in tables or plots, The authors should explain in the text how they were calculated and reference the corresponding figures or tables in the text.
    \end{itemize}

\item {\bf Experiments Compute Resources}
    \item[] Question: For each experiment, does the paper provide sufficient information on the computer resources (type of compute workers, memory, time of execution) needed to reproduce the experiments?
    \item[] Answer: \answerYes{} % Replace by \answerYes{}, \answerNo{}, or \answerNA{}.
    \item[] Justification: We provided run time and type of GPUs
    we used in our experiments. 
    \item[] Guidelines:
    \begin{itemize}
        \item The answer NA means that the paper does not include experiments.
        \item The paper should indicate the type of compute workers CPU or GPU, internal cluster, or cloud provider, including relevant memory and storage.
        \item The paper should provide the amount of compute required for each of the individual experimental runs as well as estimate the total compute. 
        \item The paper should disclose whether the full research project required more compute than the experiments reported in the paper (e.g., preliminary or failed experiments that didn't make it into the paper). 
    \end{itemize}
    
\item {\bf Code Of Ethics}
    \item[] Question: Does the research conducted in the paper conform, in every respect, with the NeurIPS Code of Ethics \url{https://neurips.cc/public/EthicsGuidelines}?
    \item[] Answer: \answerYes{} % Replace by \answerYes{}, \answerNo{}, or \answerNA{}.
    \item[] Justification: We respect the code of conduct. 
    \item[] Guidelines:
    \begin{itemize}
        \item The answer NA means that the authors have not reviewed the NeurIPS Code of Ethics.
        \item If the authors answer No, they should explain the special circumstances that require a deviation from the Code of Ethics.
        \item The authors should make sure to preserve anonymity (e.g., if there is a special consideration due to laws or regulations in their jurisdiction).
    \end{itemize}

\item {\bf Broader Impacts}
    \item[] Question: Does the paper discuss both potential positive societal impacts and negative societal impacts of the work performed?
    \item[] Answer: \answerYes{} % Replace by \answerYes{}, \answerNo{}, or \answerNA{}.
    \item[] Justification: We discuss the broader impact in the discussion section. 
    \item[] Guidelines:
    \begin{itemize}
        \item The answer NA means that there is no societal impact of the work performed.
        \item If the authors answer NA or No, they should explain why their work has no societal impact or why the paper does not address societal impact.
        \item Examples of negative societal impacts include potential malicious or unintended uses (e.g., disinformation, generating fake profiles, surveillance), fairness considerations (e.g., deployment of technologies that could make decisions that unfairly impact specific groups), privacy considerations, and security considerations.
        \item The conference expects that many papers will be foundational research and not tied to particular applications, let alone deployments. However, if there is a direct path to any negative applications, the authors should point it out. For example, it is legitimate to point out that an improvement in the quality of generative models could be used to generate deepfakes for disinformation. On the other hand, it is not needed to point out that a generic algorithm for optimizing neural networks could enable people to train models that generate Deepfakes faster.
        \item The authors should consider possible harms that could arise when the technology is being used as intended and functioning correctly, harms that could arise when the technology is being used as intended but gives incorrect results, and harms following from (intentional or unintentional) misuse of the technology.
        \item If there are negative societal impacts, the authors could also discuss possible mitigation strategies (e.g., gated release of models, providing defenses in addition to attacks, mechanisms for monitoring misuse, mechanisms to monitor how a system learns from feedback over time, improving the efficiency and accessibility of ML).
    \end{itemize}
    
\item {\bf Safeguards}
    \item[] Question: Does the paper describe safeguards that have been put in place for responsible release of data or models that have a high risk for misuse (e.g., pretrained language models, image generators, or scraped datasets)?
    \item[] Answer: \answerNA{} % Replace by \answerYes{}, \answerNo{}, or \answerNA{}.
    \item[] Justification: We do not release data or models in this work
    \item[] Guidelines:
    \begin{itemize}
        \item The answer NA means that the paper poses no such risks.
        \item Released models that have a high risk for misuse or dual-use should be released with necessary safeguards to allow for controlled use of the model, for example by requiring that users adhere to usage guidelines or restrictions to access the model or implementing safety filters. 
        \item Datasets that have been scraped from the Internet could pose safety risks. The authors should describe how they avoided releasing unsafe images.
        \item We recognize that providing effective safeguards is challenging, and many papers do not require this, but we encourage authors to take this into account and make a best faith effort.
    \end{itemize}

\item {\bf Licenses for existing assets}
    \item[] Question: Are the creators or original owners of assets (e.g., code, data, models), used in the paper, properly credited and are the license and terms of use explicitly mentioned and properly respected?
    \item[] Answer: \answerYes{} % Replace by \answerYes{}, \answerNo{}, or \answerNA{}.
    \item[] Justification: We cited all the dataset and models used in this work. 
    \begin{itemize}
        \item The answer NA means that the paper does not use existing assets.
        \item The authors should cite the original paper that produced the code package or dataset.
        \item The authors should state which version of the asset is used and, if possible, include a URL.
        \item The name of the license (e.g., CC-BY 4.0) should be included for each asset.
        \item For scraped data from a particular source (e.g., website), the copyright and terms of service of that source should be provided.
        \item If assets are released, the license, copyright information, and terms of use in the package should be provided. For popular datasets, \url{paperswithcode.com/datasets} has curated licenses for some datasets. Their licensing guide can help determine the license of a dataset.
        \item For existing datasets that are re-packaged, both the original license and the license of the derived asset (if it has changed) should be provided.
        \item If this information is not available online, the authors are encouraged to reach out to the asset's creators.
    \end{itemize}

\item {\bf New Assets}
    \item[] Question: Are new assets introduced in the paper well documented and is the documentation provided alongside the assets?
    \item[] Answer: \answerNA{} % Replace by \answerYes{}, \answerNo{}, or \answerNA{}.
    \item[] Justification: We do not release any new asset. 
    \item[] Guidelines:
    \begin{itemize}
        \item The answer NA means that the paper does not release new assets.
        \item Researchers should communicate the details of the dataset/code/model as part of their submissions via structured templates. This includes details about training, license, limitations, etc. 
        \item The paper should discuss whether and how consent was obtained from people whose asset is used.
        \item At submission time, remember to anonymize your assets (if applicable). You can either create an anonymized URL or include an anonymized zip file.
    \end{itemize}

\item {\bf Crowdsourcing and Research with Human Subjects}
    \item[] Question: For crowdsourcing experiments and research with human subjects, does the paper include the full text of instructions given to participants and screenshots, if applicable, as well as details about compensation (if any)? 
    \item[] Answer: \answerNA{} % Replace by \answerYes{}, \answerNo{}, or \answerNA{}.
    \item[] Justification: We do not have human subject. 
    \item[] Guidelines:
    \begin{itemize}
        \item The answer NA means that the paper does not involve crowdsourcing nor research with human subjects.
        \item Including this information in the supplemental material is fine, but if the main contribution of the paper involves human subjects, then as much detail as possible should be included in the main paper. 
        \item According to the NeurIPS Code of Ethics, workers involved in data collection, curation, or other labor should be paid at least the minimum wage in the country of the data collector. 
    \end{itemize}

\item {\bf Institutional Review Board (IRB) Approvals or Equivalent for Research with Human Subjects}
    \item[] Question: Does the paper describe potential risks incurred by study participants, whether such risks were disclosed to the subjects, and whether Institutional Review Board (IRB) approvals (or an equivalent approval/review based on the requirements of your country or institution) were obtained?
    \item[] Answer: \answerNA{} % Replace by \answerYes{}, \answerNo{}, or \answerNA{}.
    \item[] Justification: No human subject. 
    \item[] Guidelines:
    \begin{itemize}
        \item The answer NA means that the paper does not involve crowdsourcing nor research with human subjects.
        \item Depending on the country in which research is conducted, IRB approval (or equivalent) may be required for any human subjects research. If you obtained IRB approval, you should clearly state this in the paper. 
        \item We recognize that the procedures for this may vary significantly between institutions and locations, and we expect authors to adhere to the NeurIPS Code of Ethics and the guidelines for their institution. 
        \item For initial submissions, do not include any information that would break anonymity (if applicable), such as the institution conducting the review.
    \end{itemize}

\end{enumerate}

}
\end{document}